\documentclass{article}
\usepackage[preprint]{neurips_2024}

\usepackage[utf8]{inputenc}
\usepackage[T1]{fontenc}
\usepackage{amsmath,amssymb,amsfonts,amsthm}
\usepackage{booktabs}
\usepackage{enumitem}
\usepackage{algorithm}
\usepackage{algpseudocode}
\usepackage{graphicx}
\usepackage{microtype}
\usepackage{url}
\usepackage{hyperref}
\usepackage{natbib}
\usepackage{xcolor}

\hypersetup{
  colorlinks=true,
  linkcolor=blue!45!black,
  citecolor=blue!45!black,
  urlcolor=blue!45!black
}

\newtheorem{theorem}{Theorem}[section]
\newtheorem{proposition}[theorem]{Proposition}
\newtheorem{lemma}[theorem]{Lemma}

\theoremstyle{definition}

\newtheorem{assumption}{Assumption}
\theoremstyle{remark}

\newcommand{\E}{\mathbb E}
\newcommand{\Pbb}{\mathbb P}

\newcommand{\D}{\mathcal D}
\newcommand{\U}{\mathcal U}

\newcommand{\C}{\mathcal C}
\newcommand{\F}{\mathcal F}
\newcommand{\calP}{\mathcal P}
\newcommand{\LB}{\mathrm{LB}}
\newcommand{\UB}{\mathrm{UB}}

\newcommand{\MDE}{\mathrm{MDE}}
\newcommand{\diam}{\mathrm{diam}}
\newcommand{\Rad}{\mathrm{Rad}}
\newcommand{\TV}{\mathrm{TV}}

\title{Privacy-Robust Incrementality Measurement for Advertising Systems under Signal Loss}

\author{%
  {\fontsize{11}{10}\selectfont Prashant Shekhar\thanks{Corresponding author.} and Caroline Howard} \\
  \textit{\fontsize{10}{22}\selectfont Department of Mathematics} \\
  \textit{\fontsize{10}{22}\selectfont Embry-Riddle Aeronautical University} \\
  \textit{\fontsize{10}{22}\selectfont Daytona Beach, FL, USA}
}

\begin{document}
\maketitle

\begin{abstract}
Advertising platforms use randomized lift tests to measure incrementality, but privacy-preserving reporting systems degrade the observed signal through match-rate loss, linkability loss, attribution-window loss, aggregation-threshold suppression, randomized reporting noise, and segment-heterogeneous signal loss. This paper formulates privacy-constrained advertising measurement as a robust causal decision problem under the mentioned signal losses. Given a randomized experiment and an ambiguity set for privacy-induced degradation, the framework projects the observation-compatible fiber of clean/unfiltered experimental worlds onto the incrementality functional and returns certified, rejected, and unresolved decisions. The main result gives a sharp decision frontier. Reports outside the frontier support uniformly valid certification or rejection, whereas reports inside it contain too little information for any method to uniformly distinguish above-threshold incrementality from non-incrementality. Supporting results give finite-sample certification, sample-complexity guarantees, a minimax lower bound showing that signal loss reduces effective information, and a reporting-granularity tradeoff. On 2.0M Criteo Uplift rows and the 64K-row Hillstrom email experiment, clean conversion lift is positive in both datasets, with lifts \(0.00112\) and \(0.00495\), respectively. Population certification survives mild degradation in Criteo and severe degradation in Hillstrom, while all considered finite-sample stress settings in both datasets remain unresolved after simultaneous uncertainty and reporting noise are included. At the segment level, Hillstrom population certification falls from \(100\%\) of coarse cells to \(36.0\%\) of very fine cells, while aggregation-threshold suppression rises to \(39.5\%\). Overall, the research contributes a decision-theoretic layer for privacy-aware incrementality measurement whose output is the strongest causal-claim justified by degraded ads signals.
\end{abstract}

\section{Introduction}

Incrementality measurement asks whether advertising caused outcomes that would not otherwise have occurred. This question is central to ads ranking, advertiser reporting, campaign optimization, budget allocation, and product experimentation. The clean statistical solution is a randomized experiment with well-defined treatment, control, outcomes, and follow-up windows. In practice, however, advertising measurement is increasingly shaped by privacy and identity constraints. Users may opt out of tracking, device identifiers may be missing or unstable, conversions may be reported only in aggregate, small cells may be suppressed, and privacy-preserving systems may add randomized reporting noise before reporting conversion counts. The result is a gap between the causal estimand that the experiment was designed to identify and the degraded signal available to the analyst.

This paper studies that gap. We formulate privacy-constrained advertising measurement as a robust causal decision problem under signal loss. The starting point is a randomized incrementality experiment with a treatment indicator, outcome labels, and covariates. We assume the measurement system releases a degraded version of this experiment through 6 signal-loss layers: (i) match-rate loss drops reportable outcomes, (ii) linkability loss breaks identity continuity across events, (iii) attribution-window loss censors conversions outside reporting windows, (iv) aggregation-threshold suppression removes sparse reporting cells, (v) randomized reporting noise perturbs released aggregates, and (vi) segment-heterogeneous signal loss makes these degradations vary across customer or campaign groups. These layers correspond to common primitives in modern reporting systems, including event-level attribution reports, summary reports, aggregation services, attribution windows, contribution budgets, and interoperable private-attribution protocols \citep{privacySandboxARA,privacySandboxPrivateAggregation,ipa2023interoperable,aksu2024summary,cookieMonster2024}. Figure~\ref{fig:intro-info} summarizes the resulting decision problem. The analyst observes the degraded report and must decide whether the evidence is strong enough to certify positive incrementality.

\begin{figure}[t]
  \centering
  \includegraphics[width=\linewidth]{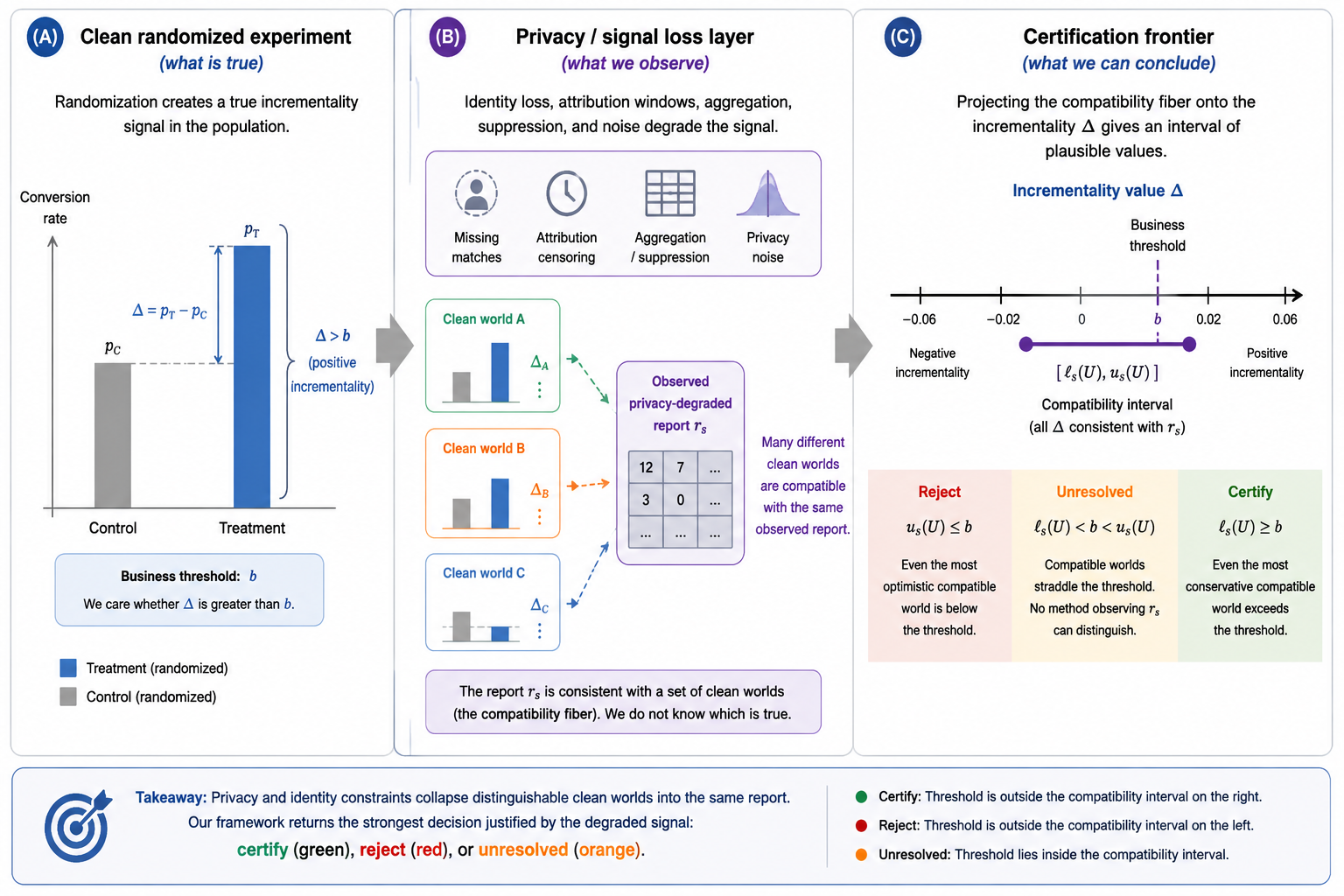}
  \caption{\small Privacy-robust incrementality measurement as a decision problem. A randomized advertising experiment identifies clean lift before reporting constraints are applied (as summarized in Table \ref{tab:privacy-api-map}). The proposed framework maps the resulting privacy-degraded report to a decision/certification frontier and returns the strongest justified decision: (i) certify positive incrementality, (ii) reject it, or (iii) mark the claim unresolved.}
  \label{fig:intro-info}
\end{figure}

The key distinction is between estimating lift and certifying a decision. A point estimate can be optimistic when high-value conversions are more likely to be unobserved, when low-volume segments are suppressed, or when randomized reporting noise dominates the signal. A deployable measurement system should therefore state what remains justified under plausible signal-loss mechanisms. We propose a framework that maps an observed privacy-degraded experiment into three sets. The first contains campaigns or segments whose lower-bound incremental value exceeds the business threshold. The second contains cases whose upper bound is below the threshold. The third contains unresolved cases where the available signal cannot support a stronger claim. 

In essence, the paper is motivated by four observations from advertising measurement practice. First, randomized incrementality data remain the strongest empirical anchor for causal claims, while missing measurement remains a separate obstacle. Second, privacy degradation changes which causal claims are identifiable, so it belongs inside the decision rule. Third, public datasets rarely include both the pre-privacy identity graph and post-privacy reports, so empirical work must be honest about what is actually observed. And fourth, business decisions require conservative certification, beyond ranking or attribution alone.

Overall, the research contributions are fivefold. \textbf{First}, we define a signal-loss model for privacy-constrained ads measurement that covers match-rate loss, linkability loss, attribution-window loss, aggregation-threshold suppression, randomized reporting noise, and segment-heterogeneous signal loss. \textbf{Second}, we introduce a compatibility-fiber view of privacy-degraded reports and specialize sharp partial-identification bands to privacy-constrained incrementality decisions over conversions and spend. \textbf{Third}, we show that the unresolved region is information-theoretic in the sense that if the business threshold lies inside the sharp band, no uniformly valid procedure observing the same degraded report can certify or reject the claim. \textbf{Fourth}, we prove sample-complexity and minimax lower-bound results that separate sampling error, reporting-noise error, and irreducible signal-loss width, with finite-sample certification handled as the direct simultaneous-coverage decision rule. And \textbf{fifth}, we define reporting-granularity diagnostics that trade off randomized reporting-noise reduction against segment-heterogeneous signal loss and make segment-level unresolved claims visible.

\section{Related Work}

\noindent \textbf{Incrementality, uplift modeling, and treatment targeting}:
Uplift modeling studies how to predict heterogeneous treatment effects and target treatment to units whose outcomes are most improved by treatment \citep{gutierrez2017causal,guelman2015ensemble,rudas2018linear,olaya2020survey}. The Criteo uplift benchmarks made large-scale randomized advertising data available for uplift modeling and individual-treatment-effect prediction \citep{diemert2018large,diemert2021largeBenchmark}. Recent work studies area under the uplift curve (AUUC) optimization and generalization guarantees for treatment targeting \citep{betlei2020treatment}. Our work asks a complementary measurement question: whether an observed randomized lift signal remains decision-certifying after privacy and identity degradation.

\noindent \textbf{Advertising measurement and attribution}:
Large field experiments have shown that observational and model-based advertising measurement can differ substantially from randomized lift estimates \citep{gordon2019comparison,gordon2023close,johnson2023inferno}. Related work studies ghost ads, geo experiments, causal impact, incremental return on ad spend, latent stratification, counterfactual demand-side platform (DSP) measurement, Facebook lift-study power calculations, and experiment-to-campaign prediction \citep{johnson2017ghost,lewis2015near,vaver2011geo,chen2019robust,brodersen2015inferring,berman2019latent,chan2017counterfactual,sun2015causal,liu2018designing,gordon2023predictive}. Attribution models and multi-touch attribution attempt to assign conversion credit across touchpoints, including causal and recurrent approaches \citep{diemert2017attribution,du2019causally,yao2022causalmta}. Our framework differs in its decision target. Attribution distributes credit, while privacy-robust incrementality asks what causal decision remains certified when reporting signals are degraded.

\noindent \textbf{Privacy-preserving measurement and signal loss}:
Differential privacy provides formal protection against individual-level disclosure \citep{dwork2008differential,dwork2014algorithmic}. Recent advertising-measurement work studies private conversion measurement, Attribution Reporting and Private Aggregation in Privacy Sandbox, Interoperable Private Attribution (IPA), summary-report optimization, on-device budgeting for differentially private ad measurement, aggregated event measurement, and privacy-preserving conversion prediction \citep{delaney2024differentially,privacySandboxARA,privacySandboxPrivateAggregation,ipa2023interoperable,aksu2024summary,ghazi2025sandbox,cookieMonster2024,metaAEM2021,clickWithoutCompromise2024,ibex2022,privacyPostClick2022}. These papers focus on privacy protocols, query release, and reporting accuracy. We focus on the causal decision layer downstream of such mechanisms. Given degraded or private reports from a randomized incrementality test, when is positive incrementality still certifiable?

\noindent \textbf{Partial identification, sensitivity, and robust decision rules}:
When data are incomplete or assumptions are weak, causal effects may be partially identified rather than point identified \citep{horowitz2000nonparametric,imbens2004confidence,manski2000monotone,richardson2014nonparametric,coppock2017double,gabriel2022nonparametric}. Recent work also studies causal inference with corrupted data and differentially private average treatment effect (ATE) or conditional average treatment effect (CATE) estimation \citep{agarwal2021corrupted,ohnishi2025locally,niu2022private}. Robust policy and off-policy evaluation methods emphasize decision rules that remain valid under confounding, weak support, or high-variance logged feedback \citep{kallus2018confounding,swaminathan2015counterfactual,su2020doubly}. Recent advertising-marketplace work applies a similar decision-object perspective to adjacent problems, including mechanism-robust online experiment design under interference \citep{shekhar2026interferenceDesign} and support-aware offline reserve-policy selection from logged auctions \citep{shekhar2026supportAwareReserve}. This paper builds on those ideas by developing a privacy-degraded incrementality \emph{decision frontier}, which is a specialization of sharp partial-identification logic to advertising reports whose signal-loss layers determine whether a lift claim is certifiable, rejectable, or impossible to resolve, together with finite-sample and information-theoretic limits for that frontier.

\section{Problem Setup}

We observe a randomized marketing experiment with treatment assignment
\(A_i\in\{0,1\}\), outcome \(Y_i\), and reported segment membership \(i\in s\).
The outcome may represent conversion, visit, spend, or advertiser value. Let
\(Y_i(1)\) and \(Y_i(0)\) denote potential outcomes. For a segment \(s\), the
clean incrementality estimand is
\begin{equation}
\label{eq:clean-ate}
\Delta_s
=
\E\!\left[Y_i(1)-Y_i(0)\mid i\in s\right].
\end{equation}
Randomization identifies this estimand under clean observation. The measurement problem is that the analyst may not observe the clean arm
means needed to estimate \(\Delta_s\). Instead, the reporting system releases a
privacy-degraded segment report \(r_s\). In the simplest mean-reporting case,
\(r_s\) contains degraded treated and control arm means
$
\widetilde\mu_{1,s},
$ $
\widetilde\mu_{0,s},
$
together with cell counts, suppression indicators, and any reporting-noise
metadata. The degraded arm mean is related to the clean arm mean by a
reportable-retention multiplier
\[
\widetilde\mu_{a,s}=q_{a,s}\mu_{a,s},
\qquad
\mu_{a,s}=\E[Y_i(a)\mid i\in s],
\qquad
a\in\{0,1\}.
\]
The multiplier \(q_{a,s}\in(0,1]\) summarizes the share of outcome signal that
remains matchable, attributable, and linkable to arm \(a\) in segment \(s\).
Later sections allow \(q_{a,s}\) to vary over an ambiguity set rather than
assuming it is exactly known.

Privacy-preserving reports may also suppress small cells or add randomized
reporting noise. A generic released cell total has the form
\[
\widetilde S_s
=
\mathbf 1\{N_s\ge m\}
\left(
N_s\widetilde\mu_s+\xi_s
\right),
\]
where \(N_s\) is the reported cell count, \(m\) is an aggregation threshold,
and \(\xi_s\) is randomized reporting noise. This notation is only meant to
make the reporting layer concrete. The rest of the paper works directly with
the released report \(r_s\), the degraded arm means it implies, and the set of
clean experimental worlds that remain compatible with that report.

The decision problem is to determine whether the clean incrementality value
\(\Delta_s\) remains certified after signal loss. Let \(b_s\ge0\) denote a
business threshold for meaningful incremental value. A valid measurement
procedure should certify segment \(s\) only when the degraded report supports
$
\Delta_s>b_s
$
uniformly over plausible signal-loss mechanisms.

\section{Signal-Loss Ambiguity Sets}

We describe privacy and identity degradation through an ambiguity set \(\U\). Each element \(u\in\U\) specifies a plausible signal-loss mechanism. A mechanism may include treatment-specific match-rate loss, segment-specific match-rate loss, linkability loss, attribution-window loss, aggregation-threshold suppression, randomized reporting noise, and segment-heterogeneous signal loss. We write
\begin{equation}
\label{loss}
    u=(\pi_{a,s},\kappa_s, \rho_{a,s}, m,\sigma_{\xi},\eta_s),
\end{equation}

where \(\pi_{a,s}\) is the match probability for treatment arm \(a\) in segment \(s\), \(\kappa_s\) controls linkability loss, \(\rho_{a,s}\) is attribution-window retention, \(m\) is the aggregation threshold, \(\sigma_{\xi}\) is the randomized reporting-noise scale, and \(\eta_s\) captures segment-heterogeneous signal loss.

The ambiguity set can be specified from product knowledge, data quality diagnostics, privacy-system documentation, or stress-test ranges. In a deployed ads setting, these ranges should be read from the reporting contract whenever possible. Attribution Reporting and IPA-style systems expose a small number of operational primitives such as  whether reports are event-level or aggregate, which attribution windows are active, which keys define summary reports, which small cells are suppressed or clipped, and what randomized reporting noise or contribution budget is applied. Table~\ref{tab:privacy-api-map} gives the mapping used in this paper.

\begin{table}[t]
  \centering
  \caption{\small Mapping the paper's signal-loss layers to deployed ads-reporting primitives and model components. Here $q_{a,s}$ is the reportable-retention multiplier defined in (\ref{q}). The empirical stress layers are simplified, auditable instantiations of these primitives over public randomized datasets, not proprietary production API traces.}
  \label{tab:privacy-api-map}
  \small
  \begin{tabular}{p{0.28\linewidth}p{0.31\linewidth}p{0.31\linewidth}}
    \toprule
    Signal-loss layer & Examples in modern systems & Model component \\
    \midrule
    Match-rate loss & Missing outcome matches, opt-outs, unmatched conversions & \(\pi_{a,s}\), \(q_{a,s}\) \\
    Linkability loss & Browser, device, or identity fragmentation across events & \(\kappa_s\), \(q_{a,s}\) \\
    Attribution-window loss & Event-level report windows, delayed or windowed conversion attribution & \(\rho_{a,s}\), \(q_{a,s}\) \\
    Aggregation-threshold suppression & Summary reports, conversion-key aggregates, minimum aggregation sizes & \(m\), unresolved cells \\
    Randomized reporting noise & Aggregation-service noise, private summary release, noisy aggregate measurement & \(\xi_g\), \(\sigma_\xi\) \\
    Segment-heterogeneous signal loss & Segment-specific reporting quality, trigger coarsening, prioritized events & \(\eta_s\), segment-specific \(\U_s\) \\
    \bottomrule
  \end{tabular}
\end{table}

For example,
\[
\pi_{a,s}\in[\underline \pi_s,\overline \pi_s],
\qquad
\rho_{a,s}\in[\underline \rho_s,\overline \rho_s],
\qquad
\sigma_\xi\in[0,\overline\sigma_\xi],
\]
with monotonicity or symmetry restrictions when justified. The analyst states the range of signal-loss mechanisms under which the decision claim should remain valid, and the framework certifies only claims that hold across that range.

This mapping is the link between the theoretical ambiguity set and real ads privacy systems. A Privacy Sandbox summary report, for instance, naturally determines a reporting partition, an aggregation threshold or contribution constraint, and a noise scale for \(\xi_g\). Event-level reporting determines attribution windows and coarsened trigger values, which enter through attribution-window loss and segment-heterogeneous signal loss. IPA-style protocols similarly return aggregate, purpose-limited measurements rather than row-level conversion labels, so their output is naturally represented as a released report \(r_s\) with an associated compatibility fiber. The paper's experiments use simplified, auditable stress layers as reproducible approximations to vendor-specific implementation traces. They test whether incrementality decisions survive the signal losses that these systems impose.

\begin{assumption}[Randomization and bounded outcomes]
\label{assump:randomization}
Treatment is randomized with known assignment probabilities bounded away from zero and one. Outcomes satisfy \(Y_i(a)\in[0,B]\) for \(a\in\{0,1\}\).
\end{assumption}

\begin{assumption}[Signal-loss ambiguity]
\label{assump:ambiguity}
The unknown measurement mechanism belongs to a known ambiguity set \(\U\). For each \(u\in\U\), the mechanism specifies loss parameters (defined in (\ref{loss})) that determine the distribution of the observed report conditional on the clean randomized experiment.
\end{assumption}

\noindent \textbf{Compatibility fibers}:
The information geometry of the problem is captured by the set of clean experimental worlds that remain compatible with the released report. Let \(r_s\) denote the privacy-degraded report for segment \(s\), including observed treated and control summaries, reporting thresholds, and randomized reporting-noise metadata. For a clean world \(\eta_s\), which contains the clean arm means and any latent match or attribution quantities needed to define the estimand, write \(T_u(\eta_s)\) for the report that would be released under signal-loss mechanism \(u\). The compatibility fiber is
\begin{equation}
\label{eq:compatibility-fiber}
\F_s(r_s;\U)
=
\{\eta_s:\text{ there exists }u\in\U\text{ such that }T_u(\eta_s)\text{ is compatible with }r_s\}.
\end{equation}
The word ``compatible'' means exact equality for deterministic degraded reports and membership in a high-probability confidence region when randomized reporting noise is present. The sharp population question is therefore geometric, asking to project the fiber \(\F_s(r_s;\U)\) onto the incrementality functional \(\Delta_s(\eta_s)\). If the projection lies entirely above the business threshold, the claim is certifiable. If it lies entirely below the threshold, the claim is rejectable. If it crosses the threshold, the released report has not preserved enough information to support either conclusion.

\section{Certified Incrementality Bounds}

\subsection{From compatibility fibers to business decisions}

Let \(\Delta_s(\eta_s)\) denote the clean incrementality value in segment \(s\) for a clean world \(\eta_s\). The sharp population bounds induced by a released report \(r_s\) are
\begin{equation}
\label{eq:fiber-projection}
\ell_s^\star(r_s;\U)
=
\inf_{\eta_s\in\F_s(r_s;\U)} \Delta_s(\eta_s),
\qquad
u_s^\star(r_s;\U)
=
\sup_{\eta_s\in\F_s(r_s;\U)} \Delta_s(\eta_s).
\end{equation}
These are identification limits caused by signal loss, with sampling uncertainty added later. Let \(\LB_s(\alpha;\U)\) and \(\UB_s(\alpha;\U)\) denote finite-sample lower and upper outer bounds for the clean incrementality value in segment \(s\), constructed so that they contain the sharp population fiber projection with probability at least \(1-\alpha\). Their concrete plug-in form is given in Eq.~\eqref{eq:finite-sample-band} later. The operational decision rule is
\begin{equation}
\label{eq:certification-rule}
\mathrm{Decision}(s)
=
\begin{cases}
\mathrm{Certify}, & \LB_s(\alpha;\U)>b_s,\\
\mathrm{Reject}, & \UB_s(\alpha;\U)\le b_s,\\
\mathrm{Unresolved}, & \text{otherwise.}
\end{cases}
\end{equation}
This rule deliberately treats unresolved cases as a valid output. If privacy degradation destroys too much information relative to the business threshold \(b_s\), the correct conclusion is a weaker decision claim.

This decision rule is the practitioner-facing version of the information geometry. The analyst asks what can be certified for all mechanisms in \(\U\), without first selecting a single true signal-loss mechanism. The sharpness of this logic is given by the certification-frontier in Theorem \ref{thm:sharp-frontier}, and the general fiber-projection result in Lemma \ref{lem:fiber-projection} in the appendix; the finite-sample validity of the rule is formalized afterward.

\subsection{Sharp match and attribution-loss frontier}

The fiber projection becomes closed form for the most common ads measurement failure mode. Suppose outcomes are nonnegative and bounded by \(B\). In segment \(s\), let the observed treated and control means be \(\widetilde \mu_{1s}\) and \(\widetilde \mu_{0s}\). Let \(q_{a,s}\) denote the combined reportable-retention multiplier in arm \(a\), collecting match-rate loss, attribution-window loss, and linkability loss when these channels are not separately identifiable. In this subsection \(q_{a,s}\) is an outcome-signal multiplier satisfying \(\widetilde\mu_{a,s}=q_{a,s}\mu_{a,s}\). It captures retained outcome signal after reporting loss, while a raw user match rate captures only one possible source of that retention. In the simplest match-attribution case,
\begin{equation}
\label{q}
q_{a,s}=\pi_{a,s}\rho_{a,s}\kappa_s\in[\underline q_{a,s},\overline q_{a,s}],
\qquad
0<\underline q_{a,s}\le \overline q_{a,s}\le 1.
\end{equation}
For nonnegative outcomes, the clean arm mean \(\mu_{a,s}=\E[Y_i(a)\mid i\in s]\) must satisfy
\[
\frac{\widetilde \mu_{a,s}}{\overline q_{a,s}}
\le
\mu_{a,s}
\le
\min\left\{
B,\frac{\widetilde \mu_{a,s}}{\underline q_{a,s}}
\right\}.
\]
The resulting population signal-loss band is
\begin{equation}
\label{eq:match-loss-band}
\ell_s(\U)
=
\frac{\widetilde \mu_{1s}}{\overline q_{1,s}}
-
\min\left\{B,\frac{\widetilde \mu_{0s}}{\underline q_{0,s}}\right\},
\qquad
u_s(\U)
=
\min\left\{B,\frac{\widetilde \mu_{1s}}{\underline q_{1,s}}\right\}
-
\frac{\widetilde \mu_{0s}}{\overline q_{0,s}}.
\end{equation}
The width of this band is a direct measure of privacy-induced information loss. It widens when retention rates are poorly known, when treatment and control retention can differ, or when the observed outcomes are sparse.

\begin{theorem}[Sharp privacy-loss certification frontier]
\label{thm:sharp-frontier}
Under the bounded nonnegative outcome model above, with positive retention lower bounds \(0<\underline q_{a,s}\le \overline q_{a,s}\), Eq.~\eqref{eq:match-loss-band} is sharp. For every \(\delta\in[\ell_s(\U),u_s(\U)]\), there exists a clean randomized experiment and a retention mechanism in \(\U\) that produce the same degraded arm means and satisfy \(\Delta_s=\delta\). Consequently:
\begin{enumerate}[leftmargin=*,itemsep=1pt]
  \item if \(\ell_s(\U)>b_s\), positive incrementality is certifiable uniformly over \(\U\);
  \item if \(u_s(\U)\le b_s\), non-incrementality relative to \(b_s\) is rejectable uniformly over \(\U\);
  \item if \(\ell_s(\U)\le b_s<u_s(\U)\), the released report is intrinsically unresolved. More precisely, there exist two clean worlds and two mechanisms in \(\U\) that induce the same released-report distribution, one with \(\Delta_s\le b_s\) and one with \(\Delta_s>b_s\). Any binary rule that must decide between certification and non-certification from the released report has worst-case error at least \(1/2\), and any uniformly valid three-state rule must leave this report unresolved.
\end{enumerate}
\end{theorem}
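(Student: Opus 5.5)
The plan is to prove sharpness first and then read off the three consequences. Validity of the band is immediate. For any mechanism in \(\U\) and any clean world producing the observed degraded means, we have \(\mu_{a,s}=\widetilde\mu_{a,s}/q_{a,s}\) with \(q_{a,s}\in[\underline q_{a,s},\overline q_{a,s}]\) and \(\mu_{a,s}\le B\). So \(\mu_{1,s}\) lies in the interval \(I_1=[\widetilde\mu_{1s}/\overline q_{1,s},\,\min\{B,\widetilde\mu_{1s}/\underline q_{1,s}\}]\), and likewise \(\mu_{0,s}\in I_0\). Since \(\Delta_s=\mu_{1,s}-\mu_{0,s}\), subtracting endpoints gives \(\ell_s(\U)\le\Delta_s\le u_s(\U)\).

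For attainment, we fix \(\delta\in[\ell_s,u_s]\). The map \((\mu_1,\mu_0)\mapsto\mu_1-\mu_0\) is continuous on the connected rectangle \(I_1\times I_0\), and it takes the values \(\ell_s\) and \(u_s\) at two corners. By the intermediate value theorem, some pair \((\mu_1,\mu_0)\in I_1\times I_0\) has \(\mu_1-\mu_0=\delta\); one can also write an explicit convex combination of the two corners. Set \(q_{a,s}=\widetilde\mu_{a,s}/\mu_a\). This lies in \([\underline q_{a,s},\overline q_{a,s}]\) because \(\mu_a\in I_a\). The degenerate case \(\widetilde\mu_{a,s}=0\) forces \(\mu_a=0\), and then any \(q\) works.

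We then realize the clean experiment as a randomized trial with potential outcomes \(Y(a)\equiv\mu_a\), which are deterministic and lie in \([0,B]\). This satisfies Assumption~\ref{assump:randomization}. The retention mechanism multiplies arm-\(a\) outcomes by \(q_{a,s}\), for example by thinning or rescaling. It reproduces exactly the observed \(\widetilde\mu_{a,s}\), so it lies in the fiber \(\F_s\), and the fiber projection \eqref{eq:fiber-projection} equals \([\ell_s,u_s]\).

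\textbf{Main obstacle.} The step I expect to need care is ensuring that the constructed mechanism is genuinely a member of \(\U\) and not just a choice of \(q\). If \(\U\) is the rectangle of factorized \(\pi\rho\kappa\) in \eqref{q}, I would realize the product range by fixing \(\rho,\kappa\) and varying \(\pi\), assuming the product image is the full interval. A second concern is that "same degraded arm means" must be upgraded to "same released-report distribution". Suppression and noise depend only on \(N_s\) and \(\widetilde\mu\), and \(N_s\) does not depend on the clean world, so they can be held fixed. The mechanism choice also needs to match any higher-moment distributional information in the report. Using deterministic outcomes with multiplicative rescaling of the values avoids Bernoulli variance discrepancies.

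\textbf{The three consequences.} Items 1 and 2 follow because every compatible world has \(\Delta_s\ge\ell_s>b_s\), or respectively \(\Delta_s\le u_s\le b_s\), so the corresponding decision is correct uniformly over \(\U\). For item 3, apply attainment with \(\delta_0=\ell_s\le b_s\) and with \(\delta_1=u_s>b_s\). This gives two worlds with identical report distributions \(P\). For any binary rule \(\phi\), the error under the first world is \(P(\phi=1)\) and under the second is \(P(\phi=0)\). These sum to 1, so the maximum error is at least \(1/2\). Finally, a uniformly valid three-state rule may certify only if the certification is correct in every compatible world, and it may reject only if the rejection is correct in every compatible world. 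Both are violated, so it must output Unresolved.
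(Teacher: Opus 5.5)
Your proposal is correct and follows essentially the same route as the paper: arm-wise feasible intervals from the monotonicity of $\widetilde\mu_{a,s}/q$, attainment by setting $q=\widetilde\mu_{a,s}/\mu$, the Minkowski difference for $\Delta_s$, and the same two-point indistinguishability argument ($P(A)+P(A^c)=1$) for item 3. The only difference is the realization step: the paper takes $Y(a)\in\{0,B\}$ with mean $\mu$ plus independent Bernoulli$(q)$ thinning, which gives every compatible world the same degraded law $B\cdot\mathrm{Bernoulli}(\widetilde\mu_{a,s}/B)$, so the variance discrepancy you worry about never arises and no value-rescaling mechanism (which is not really a match or attribution loss) is needed.
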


The theorem is the paper's main identification result. Existing missing-data bounds show that causal effects may be partially identified when outcomes are not fully observed. Here, partial-identification logic becomes a sharp measurement decision frontier for privacy-degraded ads reports. The frontier has both a constructive and an impossibility side. Outside the band, every compatible clean world supports the same decision. Inside the band, the same released report can be generated by clean worlds on opposite sides of the business threshold, so a forced binary decision has worst-case error at least \(1/2\). The unresolved region is therefore an information-theoretic part of the compatibility fiber, where the degraded report has not preserved enough information to certify or reject the claim.

\subsection{Finite-sample and reporting-noise certification}
\label{sec:finite-sample-certification}

Population sharpness is not enough for deployment because reports are finite and may contain randomized reporting noise. We keep the match-attribution relation from the previous subsection, \(\widetilde\mu_{a,s}=q_{a,s}\mu_{a,s}\) with \(q_{a,s}\in[\underline q_{a,s},\overline q_{a,s}]\), and replace the population degraded mean by a simultaneous confidence interval. Let
\[
[\underline{\widetilde\mu}_{a,s}(\alpha),
\overline{\widetilde\mu}_{a,s}(\alpha)]
\]
be simultaneous confidence bounds for \(\widetilde\mu_{a,s}\), including bounded-outcome randomization uncertainty and randomized reporting-noise uncertainty. Since smaller retention implies a larger compatible clean mean, the clean arm mean lies in the outer interval
\[
\mu_{a,s}
\in
\left[
\frac{\underline{\widetilde\mu}_{a,s}(\alpha)}{\overline q_{a,s}},
\,
\min\left\{
B,
\frac{\overline{\widetilde\mu}_{a,s}(\alpha)}{\underline q_{a,s}}
\right\}
\right].
\]
Taking the smallest compatible treated mean minus the largest compatible control mean gives the finite-sample lower bound, and taking the largest compatible treated mean minus the smallest compatible control mean gives the finite-sample upper bound:
\begin{equation}
\label{eq:finite-sample-band}
\LB_s(\alpha;\U)
=
\frac{\underline{\widetilde\mu}_{1,s}(\alpha)}{\overline q_{1,s}}
-
\min\left\{B,\frac{\overline{\widetilde\mu}_{0,s}(\alpha)}{\underline q_{0,s}}\right\},
\qquad
\UB_s(\alpha;\U)
=
\min\left\{B,\frac{\overline{\widetilde\mu}_{1,s}(\alpha)}{\underline q_{1,s}}\right\}
-
\frac{\underline{\widetilde\mu}_{0,s}(\alpha)}{\overline q_{0,s}}.
\end{equation}

The finite-sample guarantee follows directly from simultaneous coverage of the degraded arm means. Suppose Assumptions~\ref{assump:randomization} and~\ref{assump:ambiguity} hold, and the degraded mean intervals used in Eq.~\eqref{eq:finite-sample-band} have simultaneous coverage at least \(1-\alpha\) over the reported segment collection. Then, with probability at least \(1-\alpha\),
\[
\LB_s(\alpha;\U)\le \Delta_s \le \UB_s(\alpha;\U)
\qquad
\text{for all reported segments }s.
\]
Consequently, Eq.~\eqref{eq:certification-rule} controls false certification and false rejection simultaneously over all reported segments and all signal-loss mechanisms in \(\U\).

This is the sampling-theory layer. It says that a certified claim has survived three filters at once: bounded-outcome randomization uncertainty, randomized reporting noise, and worst-case signal loss in \(\U\). A rejected claim has failed the same robust test. An unresolved claim is one whose finite-sample, privacy-degraded compatibility interval still crosses the threshold.

\subsection{Sample complexity and privacy utility}
\label{sec:sample-complexity}

The finite-sample bound also gives a sample-complexity diagnostic. Let \(q_{\min}=\min_{a,s}\underline q_{a,s}\). If \(q_{\min}\) is small, every observed degraded mean must be rescaled more aggressively to recover a clean mean, so randomization error is amplified. If the retention intervals are wide, there is also an irreducible identification width that more samples cannot remove.

\begin{proposition}[Sample complexity under privacy signal loss]
\label{prop:sample-complexity}
Consider a finite reported segment collection \(\C\) and balanced per-arm sample size \(n_s\ge n\) in every segment. Suppose \(Y_i(a)\in[0,B]\), \(q_{a,s}\in[\underline q_{a,s},\overline q_{a,s}]\), \(q_{\min}=\min_{a,s}\underline q_{a,s}>0\), and independent mean-zero randomized reporting noise added to each arm total is sub-Gaussian with variance proxy \(\sigma_\xi^2\). With probability at least \(1-\alpha\), all segment effects satisfy
\begin{equation}
\label{eq:sample-complexity-error}
\Delta_s
\in
\left[
\ell_s(\U)-\Rad_n,\,
u_s(\U)+\Rad_n
\right],
\qquad
\Rad_n
=
\frac{2B}{q_{\min}}\sqrt{\frac{\log(4|\C|/\alpha)}{2n}}
+
\frac{2\sigma_\xi}{q_{\min}n}\sqrt{2\log(4|\C|/\alpha)}.
\end{equation}
Consequently, achieving finite-sample radius \(\Rad_n\le\varepsilon\) is guaranteed if
\begin{equation}
\label{eq:sample-complexity-n}
n
\gtrsim
\max\left\{
\frac{B^2}{q_{\min}^2\varepsilon^2}\log\frac{|\C|}{\alpha},
\frac{\sigma_\xi}{q_{\min}\varepsilon}\sqrt{\log\frac{|\C|}{\alpha}}
\right\}.
\end{equation}
The remaining width \(u_s(\U)-\ell_s(\U)\) is an identification cost of signal-loss uncertainty. It persists as \(n\) grows unless the ambiguity set \(\U\) itself tightens.
\end{proposition}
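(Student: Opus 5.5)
The plan is to build simultaneous confidence intervals for the degraded arm means and propagate them through the Lipschitz map that turns degraded means into clean-mean bounds. Concretely, I take as given the finite-sample coverage statement in Section~\ref{sec:finite-sample-certification}: on the event that every degraded mean \(\widetilde\mu_{a,s}\) lies in its interval \([\underline{\widetilde\mu}_{a,s}(\alpha),\overline{\widetilde\mu}_{a,s}(\alpha)]\), we have \(\LB_s\le\Delta_s\le\UB_s\). It then suffices to do two things: exhibit intervals of half-width \(h_n\), and show \(\LB_s\ge \ell_s(\U)-\Rad_n\) and \(\UB_s\le u_s(\U)+\Rad_n\).

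\textbf{Step 1: concentration per arm and segment.} The released arm mean is \(\widehat{\widetilde\mu}_{a,s}=n_s^{-1}(\sum_i \widetilde Y_i+\xi_{a,s})\), where the degraded outcomes \(\widetilde Y_i\in[0,B]\) are i.i.d. within the arm with mean \(\widetilde\mu_{a,s}\). This rests on randomization (Assumption~\ref{assump:randomization}) and on the mechanism acting conditionally on the clean experiment (Assumption~\ref{assump:ambiguity}). Hoeffding gives a two-sided deviation \(B\sqrt{\log(2/\alpha')/(2n_s)}\) with probability \(1-\alpha'\). The sub-Gaussian tail gives \(|\xi_{a,s}|/n_s\le \sigma_\xi\sqrt{2\log(2/\alpha')}/n_s\) with probability \(1-\alpha'\).

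\textbf{Step 2: union bound.} There are \(2|\C|\) arm–segment pairs and two events per pair, so I take \(\alpha'=\alpha/(4|\C|)\). Summing these events gives total failure probability at most \(\alpha\), which produces the \(\log(4|\C|/\alpha)\) factor in \eqref{eq:sample-complexity-error}; the precise constant inside the logarithm is only bookkeeping. I will use \(n_s\ge n\) to replace \(n_s\) by \(n\). The resulting half-width is
\[
h_n=B\sqrt{\log(4|\C|/\alpha)/(2n)}+\sigma_\xi\sqrt{2\log(4|\C|/\alpha)}/n.
\]

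\textbf{Step 3: propagation through the band formulas.} Compare \eqref{eq:finite-sample-band} with \eqref{eq:match-loss-band}. The treated term changes by at most \(h_n/\overline q_{1,s}\le h_n/q_{\min}\). For the control term, \(x\mapsto\min\{B,x/\underline q_{0,s}\}\) is \(1/\underline q_{0,s}\)-Lipschitz, so it changes by at most \(h_n/q_{\min}\). Hence \(\LB_s\ge\ell_s(\U)-2h_n/q_{\min}\), and symmetrically \(\UB_s\le u_s(\U)+2h_n/q_{\min}\). Since \(2h_n/q_{\min}=\Rad_n\), this gives \eqref{eq:sample-complexity-error}.

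\textbf{Step 4: sample size and persistent width.} The sample-size condition \eqref{eq:sample-complexity-n} follows by requiring each of the two terms of \(\Rad_n\) to be at most \(\varepsilon/2\) and solving for \(n\), up to constants. The persistence remark follows because \(\ell_s(\U)\) and \(u_s(\U)\) do not depend on \(n\), and by Theorem~\ref{thm:sharp-frontier} every point of \([\ell_s(\U),u_s(\U)]\) is attained by some compatible world.

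\textbf{Main obstacle.} The delicate step is Step 1: justifying that the degraded per-arm observations are bounded by \(B\) and independent with mean exactly \(q_{a,s}\mu_{a,s}\). This relies on the degradation acting unit-wise, for example through match or window thinning. Aggregation suppression must be handled by simply excluding suppressed cells from \(\C\). If unit-wise independence is not available, I would fall back on assuming the intervals directly, as Section~\ref{sec:finite-sample-certification} does.
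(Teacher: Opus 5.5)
Your architecture matches the paper's proof. It uses Hoeffding for the degraded sample mean, a sub-Gaussian tail for \(\xi_{a,s}/n_s\), and a union bound over the \(2|\C|\) arm--segment means. It then propagates each error through a \(1/q_{\min}\)-Lipschitz step to the band endpoints, picks up a factor \(2\) from differencing the arms, and splits \(\varepsilon/2\) for the sample size. Two steps, however, do not give what you claim.

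\textbf{The constant in the logarithm.} You use two separate events per arm--segment at level \(\alpha'=\alpha/(4|\C|)\). The two-sided Hoeffding and sub-Gaussian bounds then produce \(\log(2/\alpha')=\log(8|\C|/\alpha)\), not \(\log(4|\C|/\alpha)\). So your \(h_n\) does not follow from your Steps 1--2, and since the proposition states \(\Rad_n\) explicitly this is not mere bookkeeping. The paper gets the stated constant by combining the two error sources before taking a tail bound:
\begin{itemize}
\item it uses independence of the noise and the sample to treat \((\bar{\widetilde Y}_{a,s}-\widetilde\mu_{a,s})+\xi_{a,s}/n_s\) as one sub-Gaussian variable with variance proxy \(B^2/(4n_s)+\sigma_\xi^2/n_s^2\);
\item it applies a single tail bound \(2e^{-L}=\alpha/(2|\C|)\) with \(L=\log(4|\C|/\alpha)\);
\item only afterwards does it split the radius via \(\sqrt{x+y}\le\sqrt x+\sqrt y\).
\end{itemize}

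\textbf{A factor of two in Step 3.} Step 3 depends on which degraded means \(\ell_s(\U),u_s(\U)\) are evaluated at, and in your own notation it is off by a factor of two. You write \(\widetilde\mu_{a,s}\) for the population degraded mean and center the intervals at the released mean \(\widehat{\widetilde\mu}_{a,s}\). On the coverage event, \(\underline{\widetilde\mu}_{1,s}=\widehat{\widetilde\mu}_{1,s}-h_n\) can lie up to \(2h_n\) below \(\widetilde\mu_{1,s}\), and \(\overline{\widetilde\mu}_{0,s}\) up to \(2h_n\) above \(\widetilde\mu_{0,s}\). Take \(\overline q_{1,s}=\underline q_{0,s}=q_{\min}\) with \(B\) not binding: then \(\LB_s\) can approach \(\ell_s(\U)-2\Rad_n\), so \(\LB_s\ge\ell_s(\U)-\Rad_n\) fails.

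Moreover, under that reading the conclusion is trivial. By Theorem~\ref{thm:sharp-frontier}, \(\Delta_s\in[\ell_s(\U),u_s(\U)]\) holds deterministically, so no concentration would be needed. The statement has content only when \(\ell_s(\U),u_s(\U)\) are the plug-in band computed from the released means; the paper indeed calls the inputs to Eq.~\eqref{eq:match-loss-band} ``observed'' means. With that reading your Step 3 is exactly right, since each numerator shifts by exactly \(h_n\), and the chain \(\Delta_s\ge\LB_s\ge\ell_s(\U)-\Rad_n\) goes through.

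\textbf{Comparison with the paper's route.} The paper argues more directly. On the event, each released degraded mean is within \(r_n\) of the true one, which moves every compatible clean arm mean by at most \(r_n/q_{\min}\). Hence the population band, which contains \(\Delta_s\), lies inside the plug-in band enlarged by \(\Rad_n\). This never invokes the coverage statement of Section~\ref{sec:finite-sample-certification}. Your detour through \(\LB_s,\UB_s\) additionally shows that the deployed band itself stays within \(\Rad_n\) of the plug-in band, which is the fact certification planning needs. To use it, you must state this reading of \(\ell_s,u_s\) explicitly and fix the notation.
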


The rate in Eq.~\eqref{eq:sample-complexity-n} is the coverage penalty in concrete form. If only half of outcomes are matchable and attributable, the sampling-driven requirement scales by roughly \(4\). If only one fifth are reliably retained, it scales by roughly \(25\). Sample size addresses sampling error, while measurement diagnostics address the population band \([\ell_s(\U),u_s(\U)]\). More impressions shrink \(\Rad_n\), and better diagnostics tighten the ambiguity set \(\U\).

\begin{theorem}[Minimax lower bound for signal-loss measurement]
\label{thm:minimax-signal-loss}
Consider one segment with binary conversion value \(Y_i(a)\in\{0,B\}\) for a known scale \(B>0\). Let
$
p_a=\Pr\{Y_i(a)=B\},$ $ a\in\{0,1\},
$
denote the clean conversion probability in arm \(a\), so the clean incrementality target is \(\Delta=B(p_1-p_0)\). In each arm, the analyst observes only \(\widetilde Y_i(a)=R_i(a)Y_i(a)\), where \(R_i(a)\sim\mathrm{Bernoulli}(q)\) is independent retention and \(q\in(0,1]\) is known. With \(n\) observations per arm, every estimator \(\widehat\Delta\) based only on the degraded observations satisfies
\[
\sup_{p_0,p_1\in[1/4,3/4]}
\E_{p_0,p_1}\left[
\left|\widehat\Delta-B(p_1-p_0)\right|
\right]
\ge
c\,\frac{B}{\sqrt{qn}},
\]
for a universal constant \(c>0\), whenever \(qn\) is large enough. Consequently, any estimator that attains constant-probability error at most \(\varepsilon\) uniformly over this class requires
$
n\ge c'\frac{B^2}{q\varepsilon^2}
$
for a universal constant \(c'>0\). If \(q\) is not known and only \(q\in[\underline q,\overline q]\) is known, then even with infinite data the minimax absolute error is at least
$
\frac{B}{4}\left(1-\frac{\underline q}{\overline q}\right).
$
\end{theorem}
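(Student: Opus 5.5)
The proof has two parts. The first is a two-point (Le Cam) argument for the known-$q$ sampling lower bound. The second is an unidentifiability construction for the unknown-$q$ floor.

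\textbf{Known $q$.} Under the degraded model, each observation in arm $a$ equals $B$ with probability $qp_a$ and $0$ otherwise. The data in arm $a$ therefore form $n$ i.i.d.\ Bernoulli$(qp_a)$ draws, scaled by $B$.

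I fix $p_0=1/2$ and compare two hypotheses:
\begin{itemize}[leftmargin=*,itemsep=1pt]
\item $H_+$: $p_1=1/2+h$;
\item $H_-$: $p_1=1/2-h$.
\end{itemize}
Both lie in $[1/4,3/4]$ whenever $h\le 1/4$. The targets differ by $2Bh$, so Le Cam's lemma gives
\[
\sup\E|\widehat\Delta-\Delta|\ \ge\ \tfrac{Bh}{2}\,\bigl(1-\TV(P_+^{n},P_-^{n})\bigr),
\]
where the control-arm data cancel because they are identically distributed under both hypotheses.

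Next I bound the total variation through KL or Hellinger, using tensorization:
\[
\mathrm{KL}\bigl(\mathrm{Ber}(q(1/2+h))\,\|\,\mathrm{Ber}(q(1/2-h))\bigr)\le \frac{(2qh)^2}{\bar p(1-\bar p)},
\]
where both success probabilities lie in $[q/4,3q/4]$. The denominator is therefore at least of order $q$, uniformly in $q\le1$. This is the one calculation to watch: the $\chi^2$ denominator must scale like $q$, not $1$, which is what produces $qn$ rather than $q^2n$.

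It follows that $\mathrm{KL}(P_+^n\|P_-^n)\lesssim nqh^2$. Choosing $h=c_0/\sqrt{qn}$ keeps the KL below a constant, and Pinsker then gives $\TV\le 1/2$. The condition "$qn$ large enough" is exactly what ensures $h\le1/4$. The resulting bound is $cB/\sqrt{qn}$.

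The sample-size corollary follows by contraposition. Suppose an estimator has error at most $\varepsilon$ with constant probability. The same two-point argument (via testing error) forces $Bh\lesssim\varepsilon$ for the $h$ above, which gives $n\ge c'B^2/(q\varepsilon^2)$.

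\textbf{Unknown $q$.} This part is non-statistical. I exhibit two worlds $(q,p_0,p_1)$ and $(q',p_0',p_1')$, with $q,q'\in[\underline q,\overline q]$ and all $p$'s in $[1/4,3/4]$. I require identical degraded laws, meaning $qp_a=q'p_a'$ for both arms, so even infinite data cannot separate them. The estimator then incurs at least half the gap in $\Delta$ on one of the two worlds.

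The construction takes $q=\overline q$ and $q'=\underline q$. I set $p_a'=(\overline q/\underline q)\,p_a$, which needs $p_a'\le 3/4$. I will start from $p_1=3/4$, $p_0=1/4$ under $\underline q$, i.e.
\[
p_a=(\underline q/\overline q)\,p_a'.
\]
This keeps $p_a$ in range only when the ratio is not too small. The gap between the two targets is
\[
|\Delta-\Delta'|=B\,(p_1'-p_0')\,\bigl(1-\underline q/\overline q\bigr)=\tfrac{B}{2}\bigl(1-\underline q/\overline q\bigr),
\]
so the minimax error is at least $\tfrac B4(1-\underline q/\overline q)$.

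The obstacle is the range constraint when $\underline q/\overline q<1/3$. I would handle it by letting the retention mechanism differ by arm, as permitted in $\U$. Alternatively, I would choose the pair on the boundary so that only one $p$ sits at $1/4$, and rescale within the allowed box. The fallback is to note that the bound is only claimed with the constant $1/4$, and to verify that a suitably chosen pair, such as $p_0'=p_0$ shifted appropriately, still yields a gap of $\ge \tfrac B2(1-\underline q/\overline q)$. I expect this boundary bookkeeping to be the main fiddly step; the Le Cam part is routine.
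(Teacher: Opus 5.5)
Your known-$q$ argument is sound and is essentially the paper's. You fix the control arm at $p_0=1/2$, perturb only $p_1$, bound the per-observation KL by a $\chi^2$ term whose denominator is of order $q$, tensorize, and apply Pinsker with $h=c_0/\sqrt{qn}$ followed by Le Cam. The paper compares $1/2$ with $1/2+h$ rather than $1/2\pm h$, which is immaterial. Your constant $\tfrac{Bh}{2}$ is merely conservative, since a separation of $2Bh$ actually gives $Bh(1-\TV)$. The sample-size corollary via testing is also the paper's route.

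The unknown-$q$ part has a genuine gap.

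\textbf{Your explicit pair is infeasible.} It fails for every nontrivial interval, not only when $\underline q/\overline q<1/3$. Starting from $p_0'=1/4$ under $\underline q$ gives $p_0=\underline q/(4\overline q)<1/4$ whenever $\underline q<\overline q$.

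\textbf{No bookkeeping can repair this inside the box.} The theorem's model has a single retention rate shared by both arms. Indistinguishability then forces $p_a'=\lambda p_a$ in \emph{both} arms, with $\lambda=q/q'$. Taking $\lambda\ge1$ without loss of generality, the constraints $p_a,p_a'\in[1/4,3/4]$ give $p_a\in[1/4,3/(4\lambda)]$; no feasible pair exists if $\lambda>3$. Hence, for every $\lambda>1$,
\[
|\Delta'-\Delta| = B(\lambda-1)|p_1-p_0| \le \frac{B}{4}\,\frac{(\lambda-1)(3-\lambda)}{\lambda} < \frac{B}{2}\Bigl(1-\frac1\lambda\Bigr) \le \frac{B}{2}\Bigl(1-\frac{\underline q}{\overline q}\Bigr).
\]
With infinite data the observations determine $(qp_0,qp_1)$, so an estimator can output the midpoint of the identified interval for $\Delta$. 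The box-restricted version of the third claim is therefore false, not just fiddly, and your plan cannot close. Your arm-specific-retention fallback would rescue it, but that is a different model from the theorem's single $q$.

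\textbf{The missing idea is to leave the box for this part.} This is what the paper does. Fix the control arm at $p_0=0$, so its degraded law is the point mass at $0$ for every $q$ and the common rescaling is harmless. Then compare two treated-arm worlds:
\begin{itemize}
\item treated $p_1=1/2$ with $q=\underline q$;
\item treated $p_1=\underline q/(2\overline q)$ with $q=\overline q$.
\end{itemize}
Both yield $\mathrm{Bernoulli}(\underline q/2)$ treated observations. The two targets differ by $\tfrac B2(1-\underline q/\overline q)$, so any estimator errs by at least half of that in one of the two worlds.
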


The theorem explains why the sample-complexity penalty reflects a real loss of information. Retention loss reduces the effective number of observed conversion trials, so the clean-data rate is no longer attainable from the degraded report. In addition, uncertainty about the retention mechanism creates an identification floor that no estimator can remove from the released report alone. The lower bound is stated for binary conversion outcomes, while Proposition~\ref{prop:sample-complexity} gives a conservative sufficient bound for bounded outcomes and noisy reports. Together they give the same planning message. More samples can reduce finite-sample uncertainty, but information omitted by the privacy-degraded report remains unavailable unless the ambiguity set itself is tightened.

Aggregation-threshold suppression and randomized reporting noise affect certification through the standard error of released cell means. If a reported cell \(g\) has treated and control counts \(N_{1g}\) and \(N_{0g}\), and the report adds independent mean-zero noise with variance \(\sigma_\xi^2\) to each arm total, a conservative variance proxy for the released treated-control difference is
\begin{equation}
\label{eq:privacy-variance}
\widehat V_g^{\mathrm{priv}}
=
\widehat V_g^{\mathrm{rand}}
+
\sigma_\xi^2
\left(
\frac{1}{N_{1g}^2}
+
\frac{1}{N_{0g}^2}
\right).
\end{equation}
Here \(\widehat V_g^{\mathrm{rand}}\) is the usual randomization variance proxy for the clean difference in means, and the second term is the reporting-noise penalty after converting noisy arm totals into noisy arm means. This penalty is small for large cells and severe for small cells, which is why privacy-preserving reports often support aggregate campaign conclusions before they support fine segment-level claims.

For a two-arm randomized experiment with assignment-unit variance proxy \(\sigma_s^2\), effective per-arm cell size \(N_s\), and randomized reporting-noise variance \(\sigma_\xi^2\), the corresponding conservative minimum detectable effect is
\begin{equation}
\label{eq:privacy-mde}
\MDE_s^{\mathrm{priv}}
=
\left(z_{1-\alpha/2}+z_{1-\beta}\right)
\sqrt{
\frac{2\sigma_s^2}{N_s}
+
\sigma_\xi^2
\left(
\frac{1}{N_{1s}^2}
+
\frac{1}{N_{0s}^2}
\right)
}.
\end{equation}
The first quantile controls the two-sided type-I error rate \(\alpha\), and the second controls power \(1-\beta\). If the positive population signal-loss margin \(\ell_s(\U)-b_s\) is smaller than \(\MDE_s^{\mathrm{priv}}\), then certification is not expected to be statistically powered at level \((\alpha,\beta)\). The MDE formula is therefore a planning diagnostic. It separates cases where the campaign effect is too small from cases where randomized reporting noise, sparse cells, or retention uncertainty make the privacy-degraded report too weak for certification. Appendix Proposition~\ref{prop:privacy-mde} gives the derivation.

\subsection{Reporting granularity and segment geometry}
\label{sec:granularity}

Privacy reports are often released over a partition \(\calP\) of users, campaigns, geographies, devices, or time windows. Finer partitions are more interpretable and better aligned with segment safety, while smaller cells increase aggregation-threshold suppression. Coarser partitions reduce randomized reporting noise and missing-cell risk, while larger cells can hide segment-heterogeneous signal loss. This tradeoff can be made explicit.

To see the tradeoff mathematically, suppose the segment-level treatment effect function is \(H\)-Lipschitz over a segment metric \(d_{\mathrm{seg}}\). For any reported cell \(g\in\calP\), define
\[
\diam(g)=\sup_{s,s'\in g}d_{\mathrm{seg}}(s,s'),
\qquad
\bar\Delta_g=\sum_{s\in g}w_{s|g}\Delta_s,
\]
where \(\bar\Delta_g\) is the within-cell weighted average effect. The Lipschitz condition implies that every segment \(s\in g\) is no more than \(H\diam(g)\) below the cell average. Therefore, if the cell-level robust lower bound satisfies \(\bar\Delta_g\ge \LB_g(\alpha;\U)\) on the simultaneous coverage event, then every segment \(s\in g\) satisfies
\[
\Delta_s
\ge
\LB_g(\alpha;\U)-H\,\diam(g).
\]
Thus cell \(g\) supports segment-safe certification whenever
\begin{equation}
\label{eq:segment-safe}
\LB_g(\alpha;\U)-H\,\diam(g)>b_g.
\end{equation}

This calculation gives a practical rule for choosing reporting granularity. A coarse cell has larger \(N_{ag}\), so Eq.~\eqref{eq:privacy-variance} is smaller and aggregation-threshold suppression is less likely. The same coarse cell can have larger \(\diam(g)\), which increases the heterogeneity penalty \(H\diam(g)\). The empirical implementation estimates this penalty from the data by nesting the finest reporting cells inside each coarser reported cell, computing the treatment-effect dispersion of those finest cells, and using the maximum deviation from the coarser cell's weighted average as the observed \(H\diam(g)\) proxy. The best reporting partition is determined by the balance between randomized reporting-noise reduction and the segment heterogeneity it averages over.

\subsection{Segment safety under heterogeneous signal loss}

Aggregate incrementality can be certified even when subgroup claims are not. Let \(\C\) be a collection of segments, such as device types, channels, customer history groups, geography, or advertiser categories. The framework certifies segment safety by requiring
\[
\LB_s(\alpha/|\C|;\U)>b_s
\qquad
\text{for every }s\in\C
\]
or by reporting which segments remain unresolved after multiplicity correction. This step is important because signal loss is often heterogeneous. For example, match-rate loss may differ by channel, attribution-window loss may differ by customer history, and aggregation-threshold suppression may disproportionately affect small segments.

This segment-safety check is substantive. Appendix~\ref{ex:heterogeneous-signal-loss} gives a constructive example in which two signal-loss mechanisms induce the same aggregate degraded lift and lead the segment-level rule to certify different segments. The practical warning is that a campaign can appear incrementally positive in aggregate while the evidence is insufficient for one or more segments. The proposed decision object makes this visible by returning unresolved segment claims directly.

\section{Robust Incrementality Certification Algorithm}

Algorithm~\ref{alg:certification} summarizes the proposed method. The algorithm takes randomized experimental data, an ambiguity set of signal-loss mechanisms, reporting thresholds, randomized reporting-noise parameters, and business decision thresholds. It returns three sets indicating certified, rejected, and unresolved cases. In the match-attribution implementation used in the experiments, the worst case over \(\U\) has the closed form in Eq.~\eqref{eq:match-loss-band}, so match-rate loss, attribution-window loss, and linkability loss enter through the retained-signal interval \([\underline q_{a,s},\overline q_{a,s}]\).

\begin{algorithm}[t]
\caption{Privacy-robust incrementality certification}
\label{alg:certification}
\begin{algorithmic}[1]
\Require Randomized experiment \(\D\), segment collection \(\C\), ambiguity set \(\U\), business thresholds \(\{b_s\}_{s\in\C}\), error level \(\alpha\).
\Ensure Certified set \(\C_{\mathrm{cert}}\), rejected set \(\C_{\mathrm{reject}}\), and unresolved set \(\C_{\mathrm{unres}}\).
\State Identify the cell collection \(\C_{\mathrm{cell}}\subseteq\C\) induced by the reporting partition, and set the per-arm error level to \(\alpha_{\mathrm{arm}}=\alpha/\max\{2|\C_{\mathrm{cell}}|,1\}\).
\State Initialize the suppressed set \(\C_{\mathrm{supp}}\gets\emptyset\).
\For{each segment \(s\in\C\)}
  \State Estimate treated and control degraded means and arm counts in segment \(s\), the inputs to Eqs.~\eqref{eq:match-loss-band} and~\eqref{eq:finite-sample-band}.
  \If{the reporting count is below the aggregation threshold}
    \State Mark \(s\) as aggregation-suppressed by updating \(\C_{\mathrm{supp}}\gets\C_{\mathrm{supp}}\cup\{s\}\).
  \Else
    \State Convert \(\U\) to retained-signal endpoints \(\underline q_{a,s}\) and \(\overline q_{a,s}\) as in Eq.~\eqref{eq:match-loss-band}.
    \State Construct simultaneous degraded-mean bounds using \(\alpha_{\mathrm{arm}}\), the bounded-outcome Hoeffding radius, and the reporting-noise radius summarized in Eq.~\eqref{eq:sample-complexity-error}.
    \State Construct \(\LB_s(\alpha;\U)\) and \(\UB_s(\alpha;\U)\) by plugging the endpoint worst cases into Eq.~\eqref{eq:finite-sample-band}.
  \EndIf
\EndFor
\State \(\C_{\mathrm{cert}}\gets\{s\in\C\setminus\C_{\mathrm{supp}}:\LB_s(\alpha;\U)>b_s\}\) as in Eq.~\eqref{eq:certification-rule}.
\State \(\C_{\mathrm{reject}}\gets\{s\in\C\setminus\C_{\mathrm{supp}}:\UB_s(\alpha;\U)\le b_s\}\) as in Eq.~\eqref{eq:certification-rule}.
\State \(\C_{\mathrm{unres}}\gets\C\setminus(\C_{\mathrm{cert}}\cup\C_{\mathrm{reject}})\) as in Eq.~\eqref{eq:certification-rule}.
\State \Return \(\C_{\mathrm{cert}},\C_{\mathrm{reject}},\C_{\mathrm{unres}}\).
\end{algorithmic}
\end{algorithm}

The validity of Algorithm~\ref{alg:certification} is an immediate consequence of the finite-sample certification logic in Section~\ref{sec:finite-sample-certification}. On the simultaneous coverage event for the finite-sample bounds, the algorithm certifies only segments with \(\Delta_s>b_s\), rejects only segments with \(\Delta_s\le b_s\), and leaves the remaining segments unresolved by construction. The operational interpretation is simple. If the ambiguity set is wide, the algorithm returns more unresolved decisions. If the measurement system improves, the ambiguity set tightens and certification becomes easier.

\section{Empirical Results}

\subsection{Datasets}

The empirical analysis uses two randomized datasets.

\noindent \textbf{Criteo Uplift}:
The Criteo dataset contains anonymized covariates, treatment assignment, exposure, visits, and conversions from a large advertising incrementality setting \citep{diemert2018large,criteoUpliftDataset}. The experiments use \(2{,}000{,}000\) rows, with treated share \(0.850\) and conversion rate \(0.00293\). It is the main large-scale ads dataset. Its treatment imbalance and low conversion rates make it useful for studying sparse conversion certification and randomized reporting-noise sensitivity.

\noindent \textbf{Hillstrom email experiment}:
The Hillstrom dataset contains randomized email treatment arms, customer history features, visits, conversions, and spend \citep{hillstrom2008minethatdata}. The experiments use all \(64{,}000\) rows, with treated share \(0.667\), conversion rate \(0.00903\), and mean spend \(1.051\). It is smaller and more interpretable than Criteo. It is useful for segment-level decision analysis because customer history, channel, and demographic-like variables define natural reporting cells.

\subsection{Signal-loss stress layers}

Public datasets provide randomized treatment and outcomes without paired pre-privacy identity graphs and post-privacy production reports, so we introduce controlled signal-loss layers over real randomized experiments. Table~\ref{tab:privacy-api-map} gives the reporting-system interpretation of these layers; this subsection describes the corresponding implementation. The experimental implementation uses an explicit stress tuple consisting of match rate, attribution retention, linkable-identity retention, aggregation threshold, and reporting-noise scale. Match-rate loss is implemented by multiplying clean treated and control arm means by a match component that lowers the share of outcomes remaining matchable to the randomized experiment; attribution-window loss is implemented by a second retention component that lowers the matched share retained inside the reporting window; linkability loss is implemented by a third retention component that lowers the share of outcomes still linkable to the same experimental unit. In the aggregate certification frontier, these three components are set equal to \(q^{1/3}\), so their product is the reportable-retention parameter \(q_{\mathrm{true}}\in\{0.95,0.80,0.65,0.50,0.35,0.25\}\). The corresponding retention half-widths are \(\omega\in\{0.04,0.06,0.08,0.10,0.12,0.14\}\), so the decision rule constructs the interval \([q_{\mathrm{true}}-\omega,q_{\mathrm{true}}+\omega]\), clipped to \([0.02,1]\), and applies the sharp match-loss band and finite-sample band to the degraded arm means. Aggregation-threshold suppression is implemented by checking reported cell counts against \(m\in\{50,100,150,250,400,600\}\) for the six aggregate stress settings; suppressed cells are assigned the unresolved state rather than a numerical certificate, and in the granularity analysis the fixed reporting-cell threshold is \(m=100\). Randomized reporting noise is implemented as Gaussian arm-total uncertainty with scale \(\sigma_\xi\), which enters the finite-sample radius after division by the treated and control cell counts. The six aggregate stress settings use \(\sigma_\xi=(1.0,1.6,2.0,2.5,3.0,4.0)\) for Criteo and \(\sigma_\xi=(0.4,0.64,0.8,1.0,1.2,1.6)\) for Hillstrom. Segment-heterogeneous signal loss is separately implemented in the granularity diagnostic by letting each reporting cell have its own retention interval and noise scale as a function of cell size and a deterministic cell-level jitter, and by adding a data-driven heterogeneity penalty equal to the largest nested fine-cell treatment-effect deviation inside the reported cell. This keeps the empirical procedure aligned with the theory in the sense that the implementation uses auditable reporting-primitive stress tests over public randomized data, while production identity graphs and conversion timestamps remain unavailable.

\subsection{Population and finite-sample certification}

The first empirical distinction mirrors the two layers of the theory. The population signal-loss band asks what would be certifiable with exactly known degraded arm means and a retention mechanism known only through \(\U\). This is the object in Theorem~\ref{thm:sharp-frontier}. The finite-sample band then adds bounded-outcome randomization uncertainty, randomized reporting-noise uncertainty, and multiplicity correction across the reported cells within the evaluated report, as described in Section~\ref{sec:finite-sample-certification}. The six stress settings are evaluated as separate planning scenarios, with multiplicity correction applied within each report. A claim can therefore be population-certifiable and still finite-sample unresolved.

\begin{figure}[t]
  \centering
  \includegraphics[width=\linewidth]{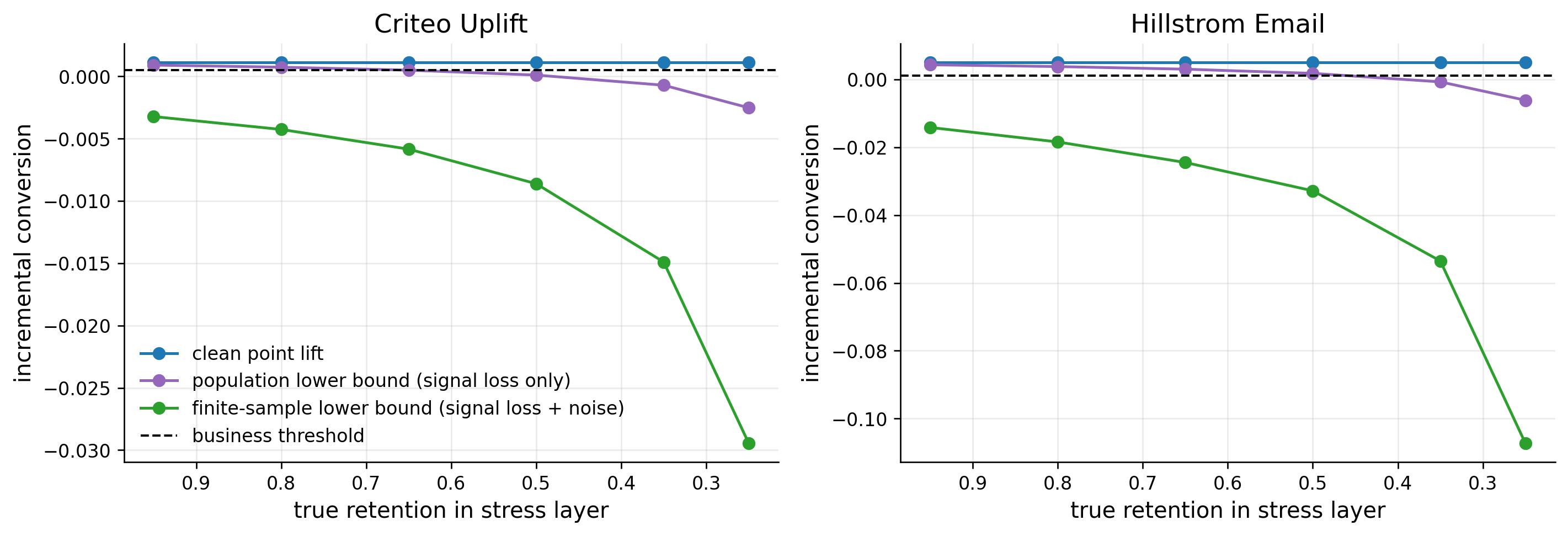}
  \caption{\small Population and finite-sample certification frontiers under auditable signal-loss stress layers. The purple curve is the population lower bound, which applies signal-loss uncertainty while treating the degraded arm means as exactly known. The green curve is the finite-sample lower bound, which adds simultaneous randomization uncertainty and randomized reporting-noise uncertainty to the same signal-loss band. A claim can therefore be population-certifiable and still finite-sample unresolved.}
  \label{fig:privacy-frontier}
\end{figure}

Figure~\ref{fig:privacy-frontier} shows the distinction directly. The population curve answers an \textbf{identification} question that if the degraded report were known without sampling error, would the signal-loss ambiguity set still allow certification? The finite-sample curve answers the \textbf{deployment} question that after adding simultaneous confidence radii and reporting-noise uncertainty, does the lower bound still clear the business threshold? In Criteo, the clean conversion lift is \(0.001117\), compared with a business threshold of \(0.0005\). The population robust lower bound certifies the claim at \(q=0.95\) and \(q=0.80\), with lower bounds \(0.000905\) and \(0.000740\). At \(q=0.65\), it reaches \(0.000499\), just below the threshold. The finite-sample lower bound is negative throughout, ranging from \(-0.00323\) at \(q=0.95\) to \(-0.02945\) at \(q=0.25\). In Hillstrom, the clean conversion lift is \(0.004955\), compared with a threshold of \(0.001\). Population certification survives through \(q=0.50\), where the lower bound is \(0.001743\), and becomes unresolved at \(q=0.35\) and below. As in Criteo, all finite-sample decisions are unresolved. This is the intended output of a conservative measurement system when the released report has limited information for a stronger deployed decision.

\subsection{Sample complexity and information loss}

The second diagnostic connects the finite-sample radius in Eq.~\eqref{eq:sample-complexity-error}, the sufficient sample-size condition in Eq.~\eqref{eq:sample-complexity-n}, and the lower-bound rate in Theorem~\ref{thm:minimax-signal-loss}. The left panel of Figure~\ref{fig:sample-complexity} evaluates \(\Rad_n\) from Eq.~\eqref{eq:sample-complexity-error} over a grid of per-arm sample sizes \(n\) and reportable-retention levels \(q_{\min}\), using \(B=1\), \(|\C|=24\), and \(\alpha=0.05\). The plotted curves set \(\sigma_\xi=0\) to isolate the retention-driven sampling penalty; the reporting-noise term in Eq.~\eqref{eq:sample-complexity-error} is included in the implementation and affects finite-sample certification in Figure~\ref{fig:privacy-frontier}. The right panel plots the constant-scaled minimax curve \(0.08\,B/\sqrt{qn}\), which is the implemented visualization of the universal-constant lower-bound rate in Theorem~\ref{thm:minimax-signal-loss}. Thus the figure compares the upper-bound planning radius used by the certification rule with the information-theoretic lower-bound scale that no estimator can uniformly beat.

\begin{figure}[t]
  \centering
  \includegraphics[width=\linewidth]{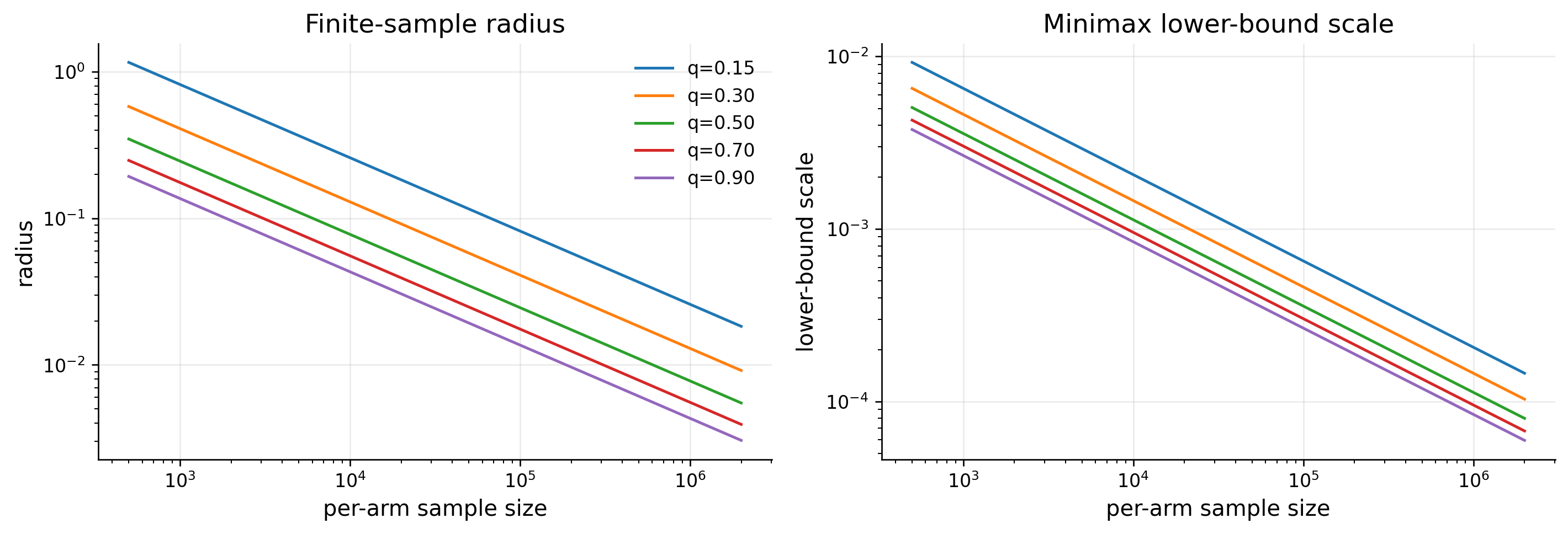}
  \caption{\small Sample-complexity and minimax scaling under privacy signal loss. The left panel evaluates the finite-sample radius \(\Rad_n\) from Eq.~\eqref{eq:sample-complexity-error} as the per-arm sample size and reportable-retention lower bound \(q_{\min}\) vary. The right panel evaluates the implemented constant-scaled minimax curve \(0.08\,B/\sqrt{qn}\), corresponding to the universal-constant lower-bound rate in Theorem~\ref{thm:minimax-signal-loss}. Lower retention enlarges both quantities, showing that privacy signal loss reduces effective information rather than merely adding a removable constant error.}
  \label{fig:sample-complexity}
\end{figure}

Figure~\ref{fig:sample-complexity} shows the expected \(1/q_{\min}^2\)-type sample burden from the first term of Eq.~\eqref{eq:sample-complexity-n}. To reach \(\Rad_n\le\varepsilon=0.02\), the sufficient per-arm sample size is \(186{,}669\) at \(q_{\min}=0.90\), \(604{,}807\) at \(q_{\min}=0.50\), and \(6.72\) million at \(q_{\min}=0.15\). For a tighter radius \(\varepsilon=0.005\), the corresponding \(q_{\min}=0.15\) requirement is \(107.5\) million per arm. The minimax curve gives the complementary impossibility statement from Theorem~\ref{thm:minimax-signal-loss} that when only a small share \(q\) of outcomes remains reportable, the effective information scale is \(qn\), so no estimator observing the same degraded report can uniformly recover clean incrementality at the clean-data rate. The sample-complexity panel and the minimax panel therefore make the same point from opposite directions. Eq.~\eqref{eq:sample-complexity-n} states how many observations are sufficient for the deployed bound to shrink, while Theorem~\ref{thm:minimax-signal-loss} shows why the \(q\)-dependent slowdown is unavoidable.

\subsection{Granularity and segment safety}

The third diagnostic tests the reporting-granularity logic in Section~\ref{sec:granularity} and the segment-safety logic above. The experiment forms increasingly fine reporting partitions in both datasets using binary conversion outcomes, \(\alpha=0.05\), bounded outcome scale \(B=1\), and business thresholds \(0.0005\) for Criteo and \(0.001\) for Hillstrom. The Criteo partitions are \(\{\texttt{f0}\}\), \(\{\texttt{f0},\texttt{f1}\}\), \(\{\texttt{f0},\texttt{f1},\texttt{f2}\}\), and \(\{\texttt{f0},\texttt{f1},\texttt{f2},\texttt{logged exposure}\}\), where the feature buckets are quartile buckets of the anonymized features. The Hillstrom partitions are \(\{\texttt{channel}\}\), \(\{\texttt{channel},\texttt{zip}\}\), \(\{\texttt{channel},\texttt{zip},\texttt{history}\}\), and \(\{\texttt{channel},\texttt{zip},\texttt{history},\texttt{recency}\}\), with recency also quartile-bucketed. For each reported cell \(g\) with row count \(n_g\), the implementation sets
\[
\widehat q_g
=
\mathrm{clip}\!\left(
0.70+
\mathrm{clip}\!\left(\frac{\log(n_g/500)}{20},-0.10,0.10\right)
+\zeta_g,\,
0.35,0.92
\right),
\]
where \(\zeta_g\in[-0.05,0.05]\) is a deterministic cell-level jitter used only to create reproducible segment-heterogeneous signal loss. The retention interval is
\[
\left[
\max\{0.02,\widehat q_g-\omega_g\},
\min\{1,\widehat q_g+\omega_g\}
\right],
\qquad
\omega_g=0.08+\frac{(250-n_g)_+}{2500},
\]
and the reporting-noise scale is
\[
\sigma_{\xi,g}=1+\frac{(200-n_g)_+}{200}.
\]
Cells with \(n_g<100\) are marked aggregation-suppressed and assigned the unresolved state. Non-suppressed cells are evaluated by the same population and finite-sample certification rules used in Figure~\ref{fig:privacy-frontier}, with the finite-sample correction applied simultaneously over the cells in the current report.

The segment-safety penalty is computed by nesting the finest partition inside every coarser reported cell. For each valid finest cell \(f\subset g\) with at least \(80\) rows and both treatment arms present, the implementation computes \(\widehat\Delta_f=\bar Y_{1f}-\bar Y_{0f}\). It then forms the row-weighted average \(\widehat{\bar\Delta}_g=\sum_{f\subset g}w_{f|g}\widehat\Delta_f\) and uses
\[
\widehat H\diam(g)
=
\max_{f\subset g}
\left|
\widehat\Delta_f-\widehat{\bar\Delta}_g
\right|
\]
as the empirical proxy for the heterogeneity term in Eq.~\eqref{eq:segment-safe}. The reported segment-safe lower bound is therefore the finite-sample lower bound minus this penalty.

\begin{figure}[t]
  \centering
  \includegraphics[width=\linewidth]{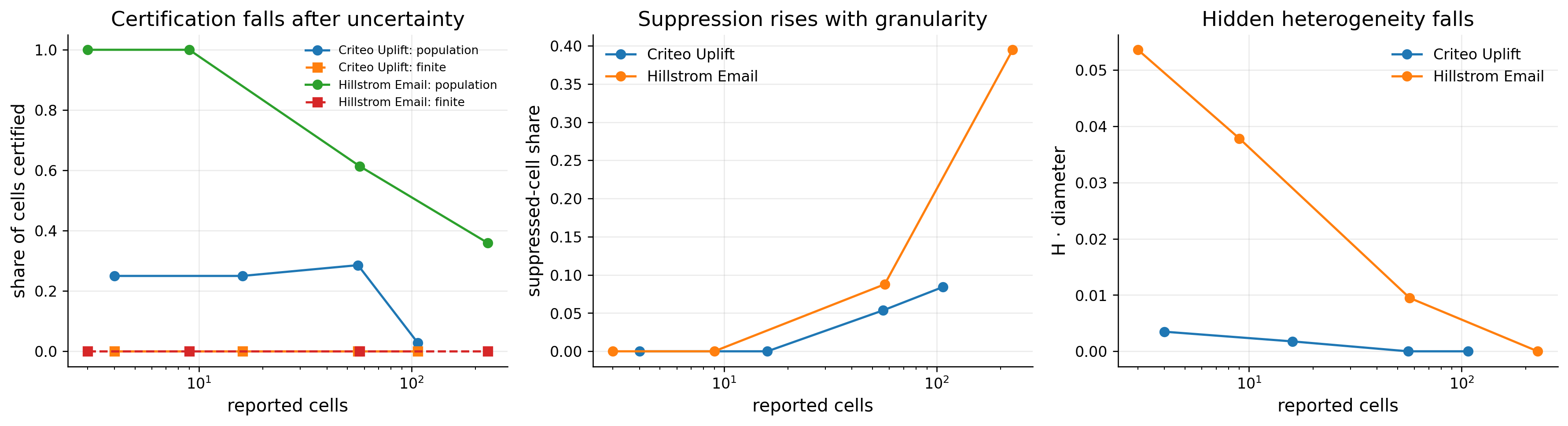}
  \caption{\small Reporting granularity diagnostics. The horizontal axis is the number of reported cells in the partition. The left panel compares the share of cells that clear the population signal-loss lower bound with the share that clear the finite-sample lower bound after simultaneous uncertainty and reporting noise are added. The middle panel reports aggregation-threshold suppression with fixed threshold \(m=100\). The right panel reports the empirical segment-heterogeneity penalty, computed as the largest nested fine-cell treatment-effect deviation inside each reported cell and used as a proxy for \(H\diam(g)\) in Eq.~\eqref{eq:segment-safe}. Finer reports reduce this heterogeneity penalty but create smaller cells and more suppression, so aggregate or population-level lift should not be read as subgroup safety.}
  \label{fig:granularity}
\end{figure}

Figure~\ref{fig:granularity} shows why segment certification is difficult under privacy constraints. In Criteo, population cell certification is about \(25\%\) for the coarse and medium partitions, \(28.6\%\) for the fine partition, and only \(2.8\%\) for the very fine partition. In Hillstrom, population certification is \(100\%\) for coarse and medium partitions, \(61.4\%\) for the fine partition, and \(36.0\%\) for the very fine partition. Finer reports reduce the empirically estimated nested-cell heterogeneity penalty and create sparse cells with aggregation-threshold suppression. Aggregation-threshold suppression reaches \(8.4\%\) in the finest Criteo partition and \(39.5\%\) in the finest Hillstrom partition. In the current stress layers, every finite-sample cell-level decision is unresolved and every segment-safe decision is also unresolved under the three-state rule. This agrees with the theory in the sense that the framework should refuse a subgroup claim when the lower bound, randomized reporting noise, multiplicity correction, and heterogeneity penalty do not jointly clear the business threshold.

\section{Discussion}

The proposed framework treats privacy-preserving ads measurement as a causal decision problem under incomplete observation. This framing differs from both pure attribution and pure privacy-protocol design. Attribution asks how to allocate credit among touchpoints. Privacy protocols ask how to report measurements while protecting individuals. Privacy-robust incrementality asks what causal claim remains justified after the measurement signal has been degraded.

This distinction matters for platform decisions. A privacy-preserving report can be technically valid yet decision-insufficient. A randomized experiment can be causally valid in design while linkability loss, attribution-window loss, or randomized reporting noise still prevent fine-grained lift certification. The framework makes this insufficiency explicit by outputting unresolved claims instead of overconfident estimates.

\section{Limitations and Future Work}

The framework depends on the ambiguity set \(\U\). If the analyst excludes plausible signal-loss mechanisms, the certification rule may be too optimistic. The empirical design also uses controlled stress layers because public datasets lack the complete pre-privacy and post-privacy measurement process. The stress layers are semantics-faithful abstractions of modern ads reporting primitives, and their numerical bands should be interpreted as reproducible stress-test results for the stated public-data setting. This limitation reflects the realistic setting for reproducible research on privacy-constrained advertising measurement. Future work should evaluate the framework against production reporting logs or browser-level API traces, richer identity graphs, auction-level ads data, and adaptive experimentation policies that choose between additional measurement and launch decisions. Appendix~\ref{app:additional-diagnostics} provides additional diagnostics that isolate the fiber geometry, privacy-aware MDE behavior, and heterogeneous signal-loss reversals.

\section{Conclusion}

Privacy constraints are now part of the statistical environment for advertising measurement. Randomization remains essential, and privacy-degraded reports require an additional certification layer when match-rate loss, linkability loss, attribution-window loss, aggregation-threshold suppression, randomized reporting noise, and segment-heterogeneous signal loss are present. This paper proposes a privacy-robust incrementality framework that converts randomized marketing evidence into certified, rejected, and unresolved decision claims. The empirical results reinforce the theoretical message. Positive clean lift and favorable population lower bounds can still fall short of deployment certification once finite-sample uncertainty and randomized reporting noise are included. The contribution is a frontier-based certification layer for modern ads systems. It preserves the strength of randomized experiments while making explicit when privacy-degraded data are insufficient for a causal decision.

\paragraph{Code availability.}
The implementation is available in the \href{https://github.com/p-shekhar/privacy-incrementality-measurement.git}{GitHub repository}.

\bibliographystyle{unsrtnat}
\bibliography{references}

\newpage
\appendix

\section{Additional Empirical Diagnostics}
\label{app:additional-diagnostics}

The main paper reports three empirical diagnostics: the certification frontier, the sample-complexity and minimax scaling curves, and the reporting-granularity tradeoff. The appendix records the supporting diagnostics used to validate the remaining theoretical components.

\subsection{Compatibility fiber diagnostics}

Figure~\ref{fig:fiber-appendix} isolates the information-geometric object behind Theorem~\ref{thm:sharp-frontier} and Lemma~\ref{lem:fiber-projection}. The degraded report has treated mean \(0.12\), control mean \(0.08\), and retention interval \([0.5,1.0]\). The corresponding sharp clean incrementality fiber is \([-0.04,0.16]\). A business threshold inside this interval is intrinsically unresolved from the released report, while thresholds outside the interval are decidable at the population level. This diagnostic is deliberately one-dimensional because it makes the core geometry visible.

\begin{figure}[t]
  \centering
  \includegraphics[width=0.86\linewidth]{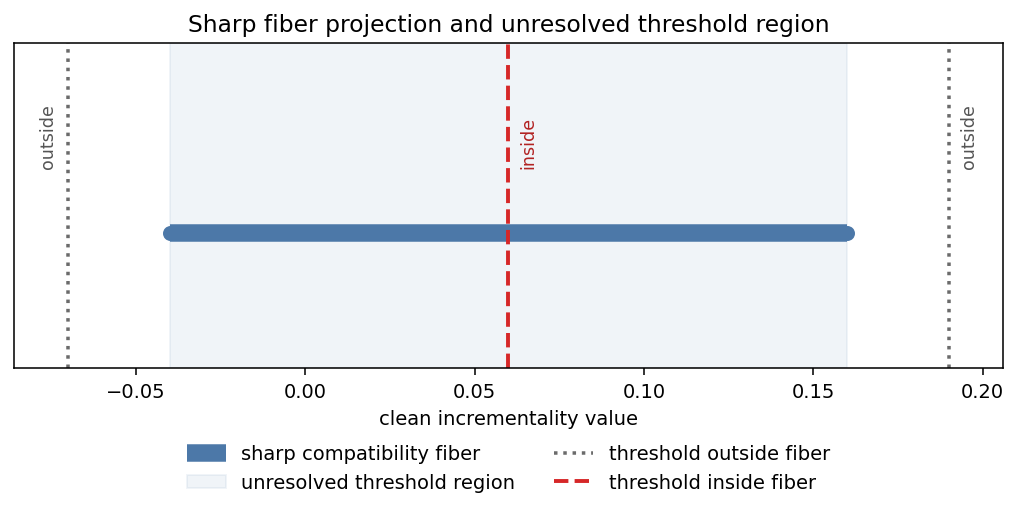}
  \caption{\small Sharp compatibility fiber for a controlled degraded-report example. This diagnostic is not dataset-specific. It fixes degraded treated and control means \(\widetilde\mu_1=0.12\) and \(\widetilde\mu_0=0.08\), with retention ambiguity \(q\in[0.5,1.0]\). The shaded interval is the full set of clean incrementality values compatible with the same released report and ambiguity set. A business threshold below the interval is certifiable, a threshold at or above the interval is rejectable, and a threshold inside the interval is unresolved because compatible clean worlds exist on both sides of the decision boundary.}
  \label{fig:fiber-appendix}
\end{figure}

\subsection{Privacy-aware detectable effects}

Figure~\ref{fig:mde-appendix} supports Proposition~\ref{prop:privacy-mde}. It evaluates Eq.~\eqref{eq:privacy-mde} over effective per-arm cell sizes from \(100\) to \(300{,}000\) and reporting-noise scales \(\sigma_\xi\in\{0,1,3,10\}\), using \(\sigma_s=0.5\), \(\alpha=0.05\), \(\beta=0.2\), and \(N_{1s}=N_{0s}=N_s\). At \(N_s=100\), the privacy-aware MDE is \(0.198\) without reporting noise and \(0.443\) when \(\sigma_\xi=10\). At \(N_s=300{,}000\), all curves are near \(0.00362\). Thus randomized reporting noise is most damaging for small cells, which is the same regime where aggregation-threshold suppression and segment-level reporting are operationally most tempting.

\begin{figure}[t]
  \centering
  \includegraphics[width=0.70\linewidth]{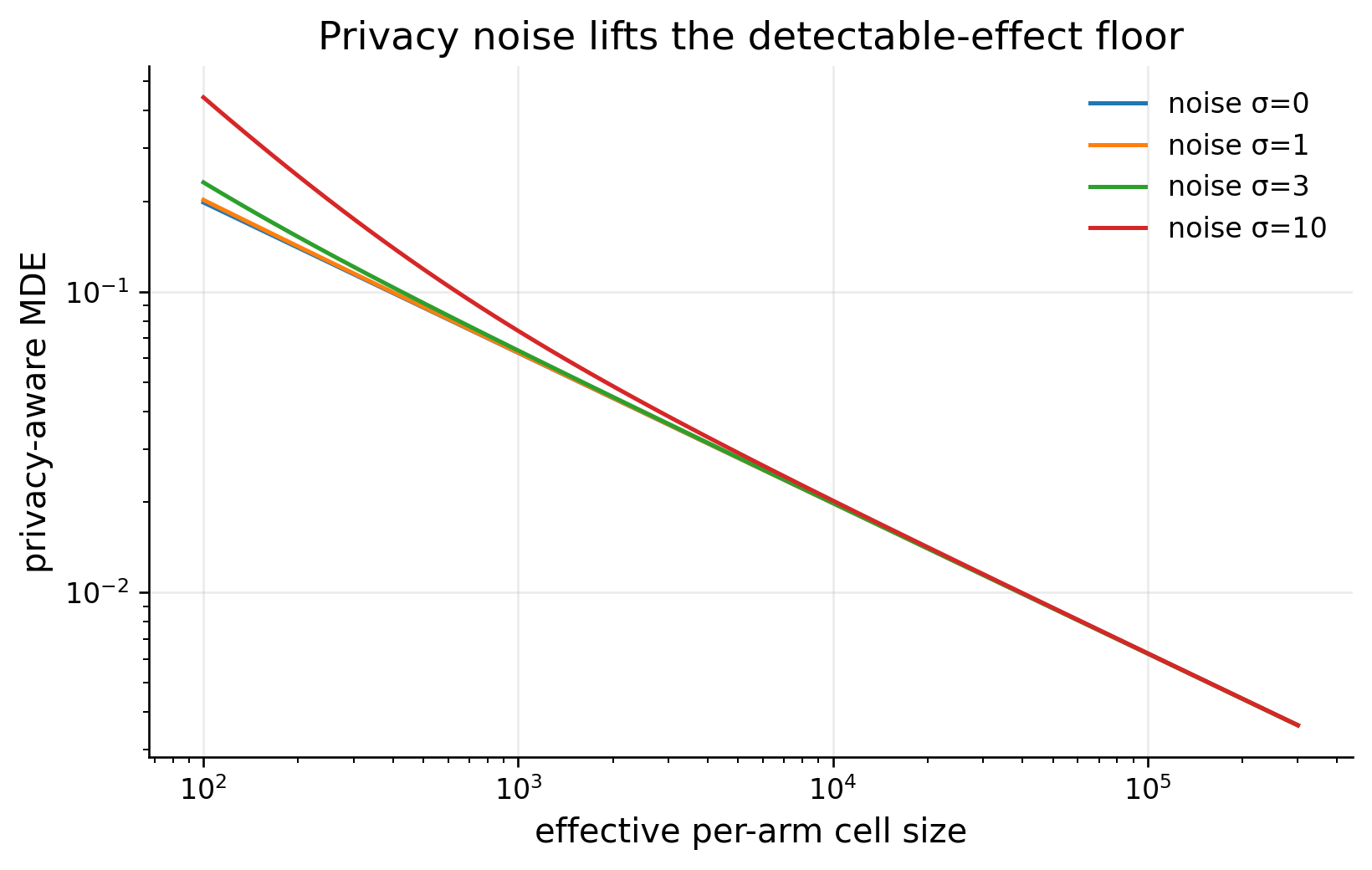}
  \caption{Privacy-aware minimum detectable effects. Reporting noise increases the detectable-effect floor most strongly for small effective cell sizes, while large cells dilute the noise contribution.}
  \label{fig:mde-appendix}
\end{figure}

\subsection{Heterogeneous signal-loss reversal}

Figure~\ref{fig:heterogeneous-appendix} visualizes the constructive example in Appendix~\ref{ex:heterogeneous-signal-loss}. The construction contains two segments with equal weights and clean control mean zero. In world \(u\), segment 1 has clean treated mean \(1\) and retention \(0.5\), while segment 2 has clean treated mean \(0\). In world \(u'\), the segment roles are reversed. Both worlds produce the same aggregate degraded treated mean \(0.25\). The appendix diagnostic then applies the same match-loss lower-bound rule used by the main certification implementation. At threshold \(b=0.25\), world \(u\) gives segment 1 band \([0.5,1.0]\) and segment 2 band \([0,0]\), so segment 1 certifies and segment 2 rejects. World \(u'\) reverses these bands and certifies segment 2 instead. This example explains why aggregate privacy-degraded lift should not be interpreted as segment safety when signal loss may vary across segments.

\begin{figure}[t]
  \centering
  \includegraphics[width=\linewidth]{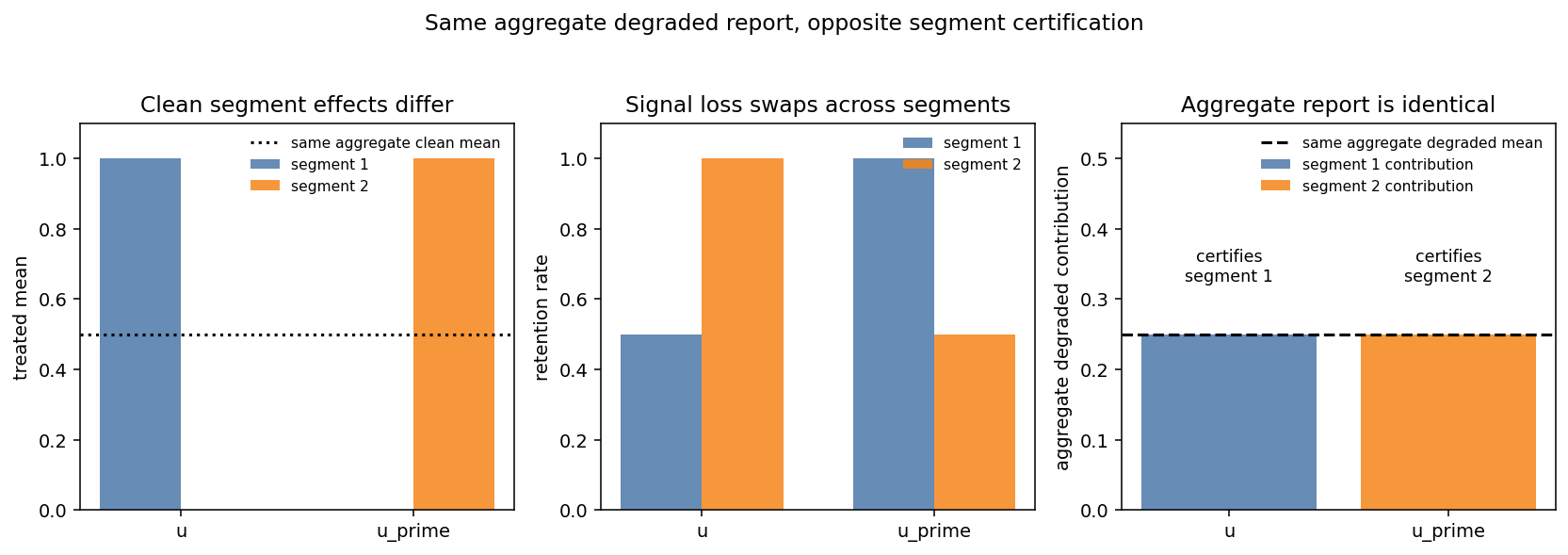}
  \caption{Heterogeneous signal loss can reverse segment decisions. Two clean segment-level worlds induce the same aggregate degraded report, while the lower-bound certification rule certifies different segments.}
  \label{fig:heterogeneous-appendix}
\end{figure}

\section{Supporting Results and Proofs}

\begin{lemma}[Fiber projection is the sharp identified set]
\label{lem:fiber-projection}
Fix a segment \(s\), a released report \(r_s\), an ambiguity set \(\U\), and the compatibility fiber \(\F_s(r_s;\U)\) in Eq.~\eqref{eq:compatibility-fiber}. Suppose \(\F_s(r_s;\U)\) is nonempty. The set of incrementality values that can be induced by clean worlds compatible with the report is
\[
\{\Delta_s(\eta_s):\eta_s\in\F_s(r_s;\U)\}.
\]
Its closed convex hull is the interval
\[
[\ell_s^\star(r_s;\U),u_s^\star(r_s;\U)]
\]
defined in Eq.~\eqref{eq:fiber-projection}. No uniformly valid procedure that observes only \(r_s\) can replace this interval by a strict subinterval without excluding a compatible clean world.
\end{lemma}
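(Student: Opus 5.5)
The argument splits into three short steps: the identified set is exactly the image of the fiber, its closed convex hull is the stated interval, and no valid procedure can shrink that interval.

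\emph{Step 1: the identified set is the image of the fiber.} Set $I_s=\{\Delta_s(\eta_s):\eta_s\in\F_s(r_s;\U)\}$. By Eq.~\eqref{eq:compatibility-fiber}, a clean world can generate (or be compatible with) the released report under some $u\in\U$ if and only if it lies in the fiber. Hence $I_s$ is exactly the set of incrementality values that some report-compatible world induces. This step is pure unpacking of definitions, and nonemptiness of the fiber guarantees $I_s\neq\emptyset$.

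\emph{Step 2: the closed convex hull is $[\ell_s^\star,u_s^\star]$.} I would show inclusion in both directions. By Eq.~\eqref{eq:fiber-projection}, $I_s\subseteq[\ell_s^\star,u_s^\star]$, interpreting the endpoints in the extended reals if the fiber is unbounded. This interval is closed and convex, so it contains $\overline{\mathrm{conv}}(I_s)$. Conversely, $\overline{\mathrm{conv}}(I_s)$ is a closed interval of $\mathbb R$ containing all of $I_s$. By the definition of infimum and supremum, there are sequences in $I_s$ converging to $\ell_s^\star$ and to $u_s^\star$, so the closed hull contains both endpoints. Convexity then gives every point in between.

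\emph{Step 3: no strict subinterval is valid.} I would argue by contradiction. Suppose a procedure reports a closed interval $J\subsetneq[\ell_s^\star,u_s^\star]$ from $r_s$ alone. Then $J$ misses some point of the hull, and the key reduction is that it must then miss a point of $I_s$ itself. Because $J$ is a closed interval, a missed point lies strictly outside $J$ on one side, say above $\sup J$. If every element of $I_s$ were at most $\sup J$, then $u_s^\star=\sup I_s\le\sup J$, contradicting the assumption that $J$ omits part of $[\ell_s^\star,u_s^\star]$ above $\sup J$. So some $\eta_s\in\F_s(r_s;\U)$ has $\Delta_s(\eta_s)\notin J$, and the procedure excludes a compatible clean world.

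That world, together with its mechanism $u\in\U$, produces the same report $r_s$. The procedure's output depends only on $r_s$, so it fails coverage under $(\eta_s,u)$, which contradicts uniform validity over $\U$.

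\emph{Main obstacle.} The delicate point is the wording ``strict subinterval'' when endpoints are not attained, or when $J$ is open. A procedure could drop an endpoint that is an unattained infimum without actually excluding any compatible world. I would handle this by stating the guarantee for closed reported intervals, or equivalently at the level of closures, which is exactly why the lemma is phrased via the closed convex hull. In the randomized-noise case, ``compatible'' means membership in the confidence region, so the same argument applies verbatim with that compatibility notion.
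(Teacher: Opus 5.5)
Your proposal is correct and follows the paper's proof: you unpack the fiber definition, identify $[\inf,\sup]$ of the image as the smallest closed interval containing it, and then show that any strict subinterval must omit a compatible value. Your restriction to closed reported intervals is a genuine sharpening. The paper's ``boundary case'' (take a compatible value close to the excluded boundary) fails when a subinterval drops only an unattained endpoint, since then no compatible world is actually excluded.
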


\begin{proof}
By definition, \(\eta_s\in\F_s(r_s;\U)\) if and only if there is at least one signal-loss mechanism \(u\in\U\) under which the clean world \(\eta_s\) can generate the released report \(r_s\). Therefore the set of clean incrementality values that cannot be ruled out from \(r_s\) is exactly
\[
\mathcal D_s(r_s;\U)
=
\{\Delta_s(\eta_s):\eta_s\in\F_s(r_s;\U)\}.
\]
The smallest closed interval containing this set is
\[
\left[
\inf_{\eta_s\in\F_s(r_s;\U)}\Delta_s(\eta_s),
\sup_{\eta_s\in\F_s(r_s;\U)}\Delta_s(\eta_s)
\right],
\]
which is Eq.~\eqref{eq:fiber-projection}. This proves that the fiber projection is an outer bound on all compatible clean effects.

It remains to show sharpness. Suppose a procedure observing only \(r_s\) reports a strict subinterval \(I\) of this projection interval while claiming uniform validity over \(\U\). Because \(I\) is strict, there is some value \(\delta\in\mathcal D_s(r_s;\U)\) outside \(I\), or there is a sequence of compatible values converging to a boundary point outside \(I\). In the first case, choose \(\eta_s^\circ\in\F_s(r_s;\U)\) with \(\Delta_s(\eta_s^\circ)=\delta\). The report \(r_s\) is compatible with \(\eta_s^\circ\), so excluding \(\delta\) violates validity in that clean world. In the boundary case, the same conclusion follows by taking a compatible value sufficiently close to the excluded boundary. Hence no uniformly valid report-based procedure can improve on the fiber projection without additional assumptions.
\end{proof}

\subsection{Proof of Theorem~\ref{thm:sharp-frontier}}

\begin{proof}
Fix segment \(s\) and arm \(a\). The degraded mean satisfies
\[
\widetilde\mu_{a,s}=q_{a,s}\mu_{a,s},
\qquad
q_{a,s}\in[\underline q_{a,s},\overline q_{a,s}],
\qquad
0\le \mu_{a,s}\le B.
\]
For any feasible retention rate \(q_{a,s}\), the clean mean must be
\[
\mu_{a,s}=\widetilde\mu_{a,s}/q_{a,s}.
\]
Since \(q_{a,s}\mapsto \widetilde\mu_{a,s}/q_{a,s}\) is decreasing on the positive interval \([\underline q_{a,s},\overline q_{a,s}]\), the smallest feasible clean mean is obtained at \(q_{a,s}=\overline q_{a,s}\), and the largest feasible clean mean is obtained at \(q_{a,s}=\underline q_{a,s}\), subject to the outcome bound \(B\). Thus the arm-specific feasible interval is
\[
\mathcal I_{a,s}
=
\left[
\frac{\widetilde\mu_{a,s}}{\overline q_{a,s}},
\min\left\{B,\frac{\widetilde\mu_{a,s}}{\underline q_{a,s}}\right\}
\right].
\]
Every point in this interval is attainable, which is the step that proves sharpness rather than only validity of the outer bound. If \(\mu\in\mathcal I_{a,s}\) and \(\widetilde\mu_{a,s}>0\), choose \(q=\widetilde\mu_{a,s}/\mu\), which lies in the retention interval by construction. Then construct a bounded clean outcome with mean \(\mu\), for example \(Y(a)=B\) with probability \(\mu/B\) and \(Y(a)=0\) otherwise. Let the outcome be reported with probability \(q\), independently conditional on the arm and segment. The degraded mean is then \(q\mu=\widetilde\mu_{a,s}\). If \(\widetilde\mu_{a,s}=0\), then \(\mathcal I_{a,s}=\{0\}\) because retention is bounded away from zero, and choosing any admissible \(q\) with \(\mu=0\) attains the interval. Thus \(\mathcal I_{a,s}\) is exactly the set of clean arm means compatible with the degraded arm mean, not merely a conservative superset.

The treatment and control arm means can be chosen independently within \(\mathcal I_{1,s}\) and \(\mathcal I_{0,s}\). Therefore the identified set for \(\Delta_s=\mu_{1,s}-\mu_{0,s}\) is the Minkowski difference
\[
\mathcal I_{1,s}-\mathcal I_{0,s}
=
\left[
\inf_{\mu_1\in\mathcal I_{1,s},\,\mu_0\in\mathcal I_{0,s}}(\mu_1-\mu_0),
\sup_{\mu_1\in\mathcal I_{1,s},\,\mu_0\in\mathcal I_{0,s}}(\mu_1-\mu_0)
\right],
\]
which is exactly \([\ell_s(\U),u_s(\U)]\) in Eq.~\eqref{eq:match-loss-band}. Any \(\delta\) inside this interval can be written as \(\delta=\mu_1-\mu_0\) for some \(\mu_1\in\mathcal I_{1,s}\) and \(\mu_0\in\mathcal I_{0,s}\), so the interval is sharp.

The first two decision statements follow immediately from this sharpness. If \(\ell_s(\U)>b_s\), every compatible clean world has \(\Delta_s>b_s\), so positive incrementality is certifiable uniformly over \(\U\). If \(u_s(\U)\le b_s\), every compatible clean world has \(\Delta_s\le b_s\), so non-incrementality relative to \(b_s\) is rejectable uniformly over \(\U\).

It remains to prove the lower-bound statement for the unresolved region. Suppose \(\ell_s(\U)\le b_s<u_s(\U)\). By sharpness, there exists a compatible clean world \(\eta_-\) and a mechanism \(u_-\in\U\) with \(\Delta_s(\eta_-)\le b_s\) that produces the observed degraded arm means. Likewise, because \(b_s<u_s(\U)\), there exists a compatible clean world \(\eta_+\) and a mechanism \(u_+\in\U\) with \(\Delta_s(\eta_+)>b_s\) that produces the same degraded arm means. Let \(P_-\) and \(P_+\) denote the corresponding released-report distributions. In the deterministic degraded-report case these distributions are identical point masses at the observed report. With randomized reporting noise, take the same released noisy report distribution conditional on the same degraded arm means, so \(P_-=P_+\) as well. Thus the released report contains no information that distinguishes the below-or-at-threshold world from the above-threshold world.

Now consider any binary rule \(\varphi\) mapping released reports to \(\{\mathrm{certify},\mathrm{not\ certify}\}\). Let \(A=\{r:\varphi(r)=\mathrm{certify}\}\). Under \(P_-\), certification is a false positive, so the false-certification probability is \(P_-(A)\). Under \(P_+\), non-certification is a false negative, so the false-noncertification probability is \(P_+(A^c)\). Since \(P_+=P_-\),
\[
P_-(A)+P_+(A^c)=P_-(A)+P_-(A^c)=1.
\]
Therefore
\[
\max\{P_-(A),P_+(A^c)\}\ge \frac{1}{2}.
\]
Equivalently, every binary rule has worst-case error at least \(1/2\). More generally, the same argument with two nonidentical report distributions gives the lower bound
\[
\max\left\{
P_+(\varphi\ne\mathrm{certify}),
P_-(\varphi=\mathrm{certify})
\right\}
\ge
\frac{1-\TV(P_+,P_-)}{2},
\]
because \(P_+(A)-P_-(A)\le \TV(P_+,P_-)\) for every measurable certification set \(A\). Finally, uniform validity rules out certification for this report because \(\eta_-\) is compatible and has \(\Delta_s\le b_s\). Uniform validity also rules out rejection because \(\eta_+\) is compatible and has \(\Delta_s>b_s\). The only uniformly valid output is therefore unresolved.
\end{proof}

\subsection{Proof of Proposition~\ref{prop:sample-complexity}}

\begin{proof}
Fix an arm \(a\) and segment \(s\). Let
\[
\widetilde\mu_{a,s}
=
\E[\widetilde Y_i\mid A_i=a,i\in s]
\]
be the population degraded arm mean, and let
\[
\widehat{\widetilde\mu}_{a,s}
=
\bar{\widetilde Y}_{a,s}
+
\frac{\xi_{a,s}}{n_s}
\]
be the released degraded arm mean after randomized reporting noise. Here \(\bar{\widetilde Y}_{a,s}\) is the empirical degraded mean before reporting noise, \(n_s\ge n\) is the number of randomized units in the arm-segment cell, and \(\xi_{a,s}\) is the noise added to the released arm total.

The released mean error decomposes as
\[
\widehat{\widetilde\mu}_{a,s}-\widetilde\mu_{a,s}
=
\left(\bar{\widetilde Y}_{a,s}-\widetilde\mu_{a,s}\right)
+
\frac{\xi_{a,s}}{n_s}.
\]
Since \(0\le \widetilde Y_i\le B\), Hoeffding's lemma implies that \(\bar{\widetilde Y}_{a,s}-\widetilde\mu_{a,s}\) is sub-Gaussian with variance proxy \(B^2/(4n_s)\). The reporting-noise term \(\xi_{a,s}/n_s\) is sub-Gaussian with variance proxy \(\sigma_\xi^2/n_s^2\). Because the randomized reporting noise is independent of the sample draw, their sum is sub-Gaussian with variance proxy
\[
v_{a,s}
=
\frac{B^2}{4n_s}
+
\frac{\sigma_\xi^2}{n_s^2}
\le
\frac{B^2}{4n}
+
\frac{\sigma_\xi^2}{n^2}.
\]
Let \(L=\log(4|\C|/\alpha)\). The sub-Gaussian tail bound gives
\[
\Pbb\left(
\left|
\widehat{\widetilde\mu}_{a,s}-\widetilde\mu_{a,s}
\right|
>
\sqrt{2v_{a,s}L}
\right)
\le
2e^{-L}
=
\frac{\alpha}{2|\C|}.
\]
There are \(2|\C|\) arm-segment means, so a union bound implies simultaneous control over both arms and all segments with probability at least \(1-\alpha\). Moreover,
\[
\sqrt{2v_{a,s}L}
\le
B\sqrt{\frac{L}{2n}}
+
\frac{\sigma_\xi}{n}\sqrt{2L}
\equiv
r_n,
\]
where the last inequality uses \(n_s\ge n\) and \(\sqrt{x+y}\le\sqrt{x}+\sqrt{y}\). Therefore, on the simultaneous event,
\[
\left|
\widehat{\widetilde\mu}_{a,s}-\widetilde\mu_{a,s}
\right|
\le r_n
\]
for every arm and segment.

The clean arm mean is obtained from a degraded mean by division by an admissible retention rate. Because every admissible retention rate is at least \(q_{\min}\), an arm-level degraded-mean error of size \(r_n\) changes any compatible clean arm mean by at most \(r_n/q_{\min}\). The treatment effect is a difference of two arm means, so the finite-sample expansion of the population signal-loss band has radius
\[
\Rad_n
=
\frac{2r_n}{q_{\min}}
=
\frac{2B}{q_{\min}}\sqrt{\frac{\log(4|\C|/\alpha)}{2n}}
+
\frac{2\sigma_\xi}{q_{\min}n}\sqrt{2\log(4|\C|/\alpha)}.
\]
This proves Eq.~\eqref{eq:sample-complexity-error}.

To obtain the displayed sufficient sample size, require each term in \(\Rad_n\) to be at most \(\varepsilon/2\). The sampling term is at most \(\varepsilon/2\) when
\[
n
\ge
\frac{8B^2}{q_{\min}^2\varepsilon^2}
\log\frac{4|\C|}{\alpha}.
\]
The reporting-noise term is at most \(\varepsilon/2\) when
\[
n
\ge
\frac{4\sigma_\xi}{q_{\min}\varepsilon}
\sqrt{2\log\frac{4|\C|}{\alpha}}.
\]
Combining these two sufficient inequalities gives Eq.~\eqref{eq:sample-complexity-n} up to universal constants. Finally, \([\ell_s(\U),u_s(\U)]\) is the population compatibility band from signal-loss uncertainty. The proof above only adds and removes finite-sample radius around this band; the identification width itself remains. Thus \(u_s(\U)-\ell_s(\U)\) persists even as \(n\to\infty\) unless the retention ambiguity set tightens.
\end{proof}

\subsection{Proof of Theorem~\ref{thm:minimax-signal-loss}}

\begin{proof}
We first prove the sampling lower bound when the retention probability \(q\) is known. A lower bound for a restricted subproblem is also a lower bound for the full model, so it is enough to hold the control arm fixed and vary only the treated arm. Set \(p_0=1/2\). The target becomes
\[
\Delta=B(p_1-p_0)=B\left(p_1-\frac12\right).
\]
For treated units, \(Y_i(1)/B\sim\mathrm{Bernoulli}(p_1)\) and \(R_i(1)\sim\mathrm{Bernoulli}(q)\) independently. Therefore the normalized degraded observation
\[
\frac{\widetilde Y_i(1)}{B}
=
\frac{R_i(1)Y_i(1)}{B}
\]
is Bernoulli with success probability \(q p_1\). Signal loss has therefore reduced the effective Bernoulli success probability by a factor \(q\), which is where the information loss enters.

Choose two treated-arm conversion probabilities
\[
p_1^-=\frac12,
\qquad
p_1^+=\frac12+h,
\]
where \(0<h\le 1/4\). Both probabilities lie in \([1/4,3/4]\). The corresponding clean incrementality values are
\[
\Delta^- =0,
\qquad
\Delta^+ = Bh,
\]
so their separation is \(Bh\). Let \(P_-\) and \(P_+\) denote the laws of the \(n\) degraded treated-arm observations under these two worlds. Then
\[
P_-=\mathrm{Bernoulli}(q/2)^{\otimes n},
\qquad
P_+=\mathrm{Bernoulli}\!\left(q\left(\frac12+h\right)\right)^{\otimes n}.
\]

We next choose \(h\) so these two clean effects are separated, while the degraded experiments remain hard to distinguish. For Bernoulli probabilities \(r\) and \(r+\delta\), with \(0<r<r+\delta<1\),
\[
\mathrm{KL}\!\left(\mathrm{Bernoulli}(r)\,\|\,\mathrm{Bernoulli}(r+\delta)\right)
\le
\frac{\delta^2}{(r+\delta)(1-r-\delta)}.
\]
Here \(r=q/2\) and \(\delta=qh\). Because \(h\le 1/4\),
\[
r+\delta
=
q\left(\frac12+h\right)
\ge
\frac q2,
\qquad
1-r-\delta
=
1-q\left(\frac12+h\right)
\ge
\frac14,
\]
where the second inequality uses \(q\le 1\). Hence \((r+\delta)(1-r-\delta)\ge q/8\), and the one-observation KL divergence is at most \(8q h^2\). Tensorization of KL divergence over \(n\) independent observations gives
\[
\mathrm{KL}(P_-\,\|\,P_+)
\le
8nqh^2.
\]
Choose
\[
h=\frac{c_0}{\sqrt{qn}}
\]
with \(c_0>0\) sufficiently small and \(qn\) sufficiently large so that \(h\le 1/4\) and \(\mathrm{KL}(P_-\,\|\,P_+)\le 1/8\). Pinsker's inequality then gives
\[
\TV(P_-,P_+)
\le
\sqrt{\frac{\mathrm{KL}(P_-\,\|\,P_+)}{2}}
\le
\frac14.
\]

Let \(\widehat\Delta\) be any estimator based on the degraded observations. The expectation form of Le Cam's two-point argument follows from the identity
\[
\int \min\{dP_-,dP_+\}
=
1-\TV(P_-,P_+).
\]
For every possible estimator value \(t\),
\[
|t-\Delta^-|+|t-\Delta^+|
\ge
|\Delta^+-\Delta^-|
=
Bh.
\]
Integrating this inequality with respect to the common part \(\min\{dP_-,dP_+\}\) yields
\[
\E_-|\widehat\Delta-\Delta^-|
+
\E_+|\widehat\Delta-\Delta^+|
\ge
Bh\{1-\TV(P_-,P_+)\}.
\]
Therefore
\[
\sup_{\theta\in\{-,+\}}
\E_\theta|\widehat\Delta-\Delta_\theta|
\ge
\frac{Bh}{2}\{1-\TV(P_-,P_+)\}
\ge
\frac{3Bh}{8}.
\]
Substituting \(h=c_0/\sqrt{qn}\) gives
\[
\sup_{p_0,p_1\in[1/4,3/4]}
\E_{p_0,p_1}|\widehat\Delta-B(p_1-p_0)|
\ge
c\frac{B}{\sqrt{qn}}
\]
for a universal constant \(c>0\). This proves the stated minimax estimation-rate lower bound.

The sample-size form follows from the same construction. If an estimator had absolute error at most \(\varepsilon\) with high constant probability uniformly over the class, then, whenever \(\varepsilon<Bh/2\), the intervals \([\Delta^- -\varepsilon,\Delta^-+\varepsilon]\) and \([\Delta^+-\varepsilon,\Delta^++\varepsilon]\) would be disjoint. Thresholding \(\widehat\Delta\) at \((\Delta^-+\Delta^+)/2\) would then produce a test that distinguishes \(P_-\) from \(P_+\) with the same high constant success probability. The best possible success probability for testing two simple hypotheses with equal prior is \((1+\TV(P_-,P_+))/2\), which is at most \(5/8\) under the construction above. Thus constant-probability estimation error \(\varepsilon\) requires \(\varepsilon\gtrsim Bh\). Since \(h=c_0/\sqrt{qn}\), this implies
\[
n
\gtrsim
\frac{B^2}{q\varepsilon^2},
\]
which gives the displayed lower bound up to a universal constant.

We now prove the non-vanishing ambiguity lower bound when the retention probability is not known exactly. This part is an identification argument rather than a sampling argument, so it remains even with infinite data. Consider a one-arm subproblem with the control mean fixed at zero. Let the observed degraded treated mean be
\[
\widetilde\mu=\frac{\underline q B}{2}.
\]
This single degraded mean is compatible with two different clean worlds. In the first world, the retention probability is \(q=\underline q\), so the compatible clean treated mean is
\[
\mu^+
=
\frac{\widetilde\mu}{\underline q}
=
\frac B2.
\]
In the second world, the retention probability is \(q=\overline q\), so the compatible clean treated mean is
\[
\mu^-
=
\frac{\widetilde\mu}{\overline q}
=
\frac B2\frac{\underline q}{\overline q}.
\]
Both clean means lie in \([0,B]\), both retention probabilities lie in the ambiguity set \([\underline q,\overline q]\), and both worlds induce exactly the same degraded report because \(q\mu=\widetilde\mu\) in both cases. Their clean incrementality values differ by
\[
\mu^+-\mu^-
=
\frac B2\left(1-\frac{\underline q}{\overline q}\right).
\]
Any estimator that observes only the degraded report must return the same numerical value in these two worlds. For any number \(t\),
\[
\max\{|t-\mu^+|,\ |t-\mu^-|\}
\ge
\frac{\mu^+-\mu^-}{2}.
\]
Consequently the worst-case absolute error over these two compatible clean worlds is at least
\[
\frac B4\left(1-\frac{\underline q}{\overline q}\right).
\]
This proves the non-vanishing identification lower bound.
\end{proof}

\begin{proposition}[Privacy-aware MDE derivation]
\label{prop:privacy-mde}
For a two-arm randomized experiment with assignment-unit variance proxy \(\sigma_s^2\), effective per-arm cell size \(N_s\), and independent reporting-noise variance \(\sigma_\xi^2\) added to each arm total, the normal-approximation minimum detectable effect is Eq.~\eqref{eq:privacy-mde}.
\end{proposition}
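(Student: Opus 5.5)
The plan is a standard normal-approximation power calculation for the released treated-minus-control difference in means, with the reporting-noise variance added to the sampling variance. In cell \(s\) the released arm means are \(\widehat{\widetilde\mu}_{a,s}=\bar Y_{a,s}+\xi_{a,s}/N_{as}\). Here \(\xi_{1,s},\xi_{0,s}\) are independent and mean zero with variance \(\sigma_\xi^2\), and they are independent of the randomization and sampling draw. Let \(\widehat D_s=\widehat{\widetilde\mu}_{1,s}-\widehat{\widetilde\mu}_{0,s}\) be the released difference.

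\textbf{Variance of \(\widehat D_s\).} By independence across arms and between sampling and noise, the variance splits as
\[
\mathrm{Var}(\widehat D_s)=\mathrm{Var}(\bar Y_{1,s})+\mathrm{Var}(\bar Y_{0,s})+\frac{\sigma_\xi^2}{N_{1s}^2}+\frac{\sigma_\xi^2}{N_{0s}^2}.
\]
Bounding each arm's per-unit variance by the assignment-unit proxy \(\sigma_s^2\), with effective per-arm size \(N_s\), the first two terms are at most \(2\sigma_s^2/N_s\). This is the randomization part \(\widehat V^{\mathrm{rand}}\) in Eq.~\eqref{eq:privacy-variance}. Define \(V_s\) as the resulting conservative total, which is exactly the quantity under the square root in Eq.~\eqref{eq:privacy-mde}.

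\textbf{Asymptotic normality.} By the CLT for the bounded-outcome arm means (Assumption~\ref{assump:randomization}), \((\bar Y_{1,s}-\bar Y_{0,s}-\Delta_s)/\sqrt{2\sigma_s^2/N_s}\) is approximately standard normal. I will take the reporting noise to be Gaussian, matching the implementation, so adding it keeps \(\widehat D_s\approx N(\Delta_s,V_s)\). If the noise is only sub-Gaussian, the statement should be read as a normal-approximation heuristic.

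\textbf{Test and power.} A two-sided level-\(\alpha\) test rejects \(\Delta_s=0\) when \(|\widehat D_s|>z_{1-\alpha/2}\sqrt{V_s}\). Under the alternative \(\Delta_s=\delta>0\), power is approximately
\[
\Phi\bigl(\delta/\sqrt{V_s}-z_{1-\alpha/2}\bigr),
\]
after dropping the negligible lower-tail rejection term. Setting this equal to \(1-\beta\) and solving gives \(\delta=(z_{1-\alpha/2}+z_{1-\beta})\sqrt{V_s}\), which is Eq.~\eqref{eq:privacy-mde}. Replacing the true variance by the upper bound \(V_s\) can only enlarge the MDE, so the formula is conservative.

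\textbf{Main obstacle.} The calculation is routine; the delicate part is interpretation. The proposition's "effective per-arm cell size" must be defined so that the sampling term really is \(2\sigma_s^2/N_s\) even when arms are unbalanced (for example, \(N_s\) as a harmonic-type average of \(N_{1s}\) and \(N_{0s}\)). The noise term, by contrast, uses the actual arm counts \(N_{1s},N_{0s}\). I will state this convention explicitly. I will also remark that the comparison of the margin \(\ell_s(\U)-b_s\) with \(\MDE_s^{\mathrm{priv}}\) in the main text uses this same shift argument, applied to a threshold of \(b_s\) instead of \(0\).
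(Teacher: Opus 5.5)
Your proposal is correct and follows essentially the same route as the paper's proof. Both arguments split the variance of the released difference into the randomization term $2\sigma_s^2/N_s$ plus the independent reporting-noise term $\sigma_\xi^2(1/N_{1s}^2+1/N_{0s}^2)$, and then apply the normal-approximation power shift $d=(z_{1-\alpha/2}+z_{1-\beta})\mathrm{SE}$; your harmonic-mean convention for $N_s$ under unbalanced arms and your Gaussian-noise caveat are sensible refinements of the paper's balanced-arm, heuristic treatment.
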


\subsection{Proof of Proposition~\ref{prop:privacy-mde}}

\begin{proof}
Let \(\widehat\Delta_s^{\mathrm{priv}}\) denote the reported treated-control difference in segment \(s\). Under a balanced effective per-arm sample size \(N_s\), the randomization variance proxy for a difference in means is
\[
\mathrm{Var}_{\mathrm{rand}}(\widehat\Delta_s)
\approx
\frac{\sigma_s^2}{N_s}+\frac{\sigma_s^2}{N_s}
=
\frac{2\sigma_s^2}{N_s}.
\]
If the reported treated and control totals include independent mean-zero randomized reporting noises \(\xi_{1s}\) and \(\xi_{0s}\) with variance \(\sigma_\xi^2\), then the reported means contain additional noise terms \(\xi_{1s}/N_{1s}\) and \(\xi_{0s}/N_{0s}\). Because these noises are independent and enter the difference with opposite signs, their variance contribution is
\[
\sigma_\xi^2\left(\frac{1}{N_{1s}^2}+\frac{1}{N_{0s}^2}\right).
\]
Adding the randomization and reporting-noise contributions gives the variance inside Eq.~\eqref{eq:privacy-mde}. For a normal-approximation power calculation, a two-sided level-\(\alpha\) test rejects when the estimated effect exceeds \(z_{1-\alpha/2}\) standard errors under the null. To have power \(1-\beta\) against an alternative effect size \(d\), the mean shift \(d\) must exceed this rejection threshold by another \(z_{1-\beta}\) standard errors. Thus, with \(\mathrm{SE}\) denoting standard error,
\[
d
=
\left(z_{1-\alpha/2}+z_{1-\beta}\right)\mathrm{SE}.
\]
Substituting the privacy-aware standard error gives Eq.~\eqref{eq:privacy-mde}. The sample-size statement follows because certification requires the statistically detectable effect to be no larger than the robust population margin \(\ell_s(\U)-b_s\); solving that inequality over feasible allocations gives the required sample size.
\end{proof}

\subsection{Heterogeneous signal-loss reversal example}
\label{ex:heterogeneous-signal-loss}

This example shows that aggregate privacy-degraded lift can be compatible with opposite segment-level certification decisions. Let outcomes be bounded by \(B=1\), let the two segments have equal population weight, and let the clean control mean be zero in both segments. Let the ambiguity set allow segment-specific treated-arm retention rates in \([1/2,1]\); control retention is irrelevant because the clean control means are zero.

Under mechanism \(u\), set the clean treated means to
\[
\mu_{1,1}=1,
\qquad
\mu_{1,2}=0,
\]
and choose treated retention rates
\[
q_{1,1}=1/2,
\qquad
q_{1,2}=1.
\]
The degraded treated means are \(\widetilde\mu_{1,1}=1/2\) and \(\widetilde\mu_{1,2}=0\), so the aggregate degraded treated mean is \(1/4\). The aggregate degraded control mean is zero.

Under mechanism \(u'\), reverse the segment roles:
\[
\mu'_{1,1}=0,
\qquad
\mu'_{1,2}=1,
\]
with treated retention rates
\[
q'_{1,1}=1,
\qquad
q'_{1,2}=1/2.
\]
The degraded treated means are \(\widetilde\mu'_{1,1}=0\) and \(\widetilde\mu'_{1,2}=1/2\), so the aggregate degraded treated mean is again \(1/4\), and the aggregate degraded control mean is again zero. Thus the two worlds have the same aggregate degraded lift.

The segment-level lower-bound decisions are opposite. Under \(u\), the segment 1 degraded treated mean is \(1/2\), its control mean is zero, and the largest admissible retention rate is \(1\), so the segment-level lower bound is at least \(1/2\). Segment 2 has degraded treated and control means equal to zero, so both its lower and upper bounds are zero. Thus for any \(b\in(0,1/2)\), the lower-bound rule certifies segment 1 and rejects segment 2. Under \(u'\), the same argument certifies segment 2 and rejects segment 1. The aggregate degraded report is unchanged across the two constructions. The example shows that aggregate certification cannot imply segment-level certification unless \(\U\) restricts how signal loss can vary across segments.

\end{document}